\documentclass[conference]{IEEEtran}
\IEEEoverridecommandlockouts

\usepackage{cite}
\usepackage{subcaption}
\usepackage{amsmath,amssymb,amsfonts}
\usepackage{algorithmic}

\usepackage{graphicx}
\usepackage[percent]{overpic}
\usepackage{textcomp}
\usepackage{xcolor}
\def\BibTeX{{\rm B\kern-.05em{\sc i\kern-.025em b}\kern-.08em
    T\kern-.1667em\lower.7ex\hbox{E}\kern-.125emX}}

\usepackage{cite}
\usepackage{amsmath,amssymb,amsfonts}
\usepackage{algorithmic}
\usepackage{graphicx}
\usepackage{textcomp}
\usepackage{xcolor}
\usepackage{subfig}
\usepackage{cite}
\usepackage{amsmath,amssymb,amsfonts}
\usepackage{algorithmic}
\usepackage{graphicx}
\usepackage{textcomp}
\usepackage{xcolor}
\usepackage[ruled,vlined]{algorithm2e}
\usepackage{microtype}

\usepackage{booktabs}
\usepackage{enumitem}
\usepackage{amsthm}

\theoremstyle{plain}
\newtheorem{theorem}{Theorem}[section]
\newtheorem{proposition}[theorem]{Proposition}
\newtheorem{lemma}[theorem]{Lemma}

\theoremstyle{definition}

\newtheorem{assumption}[theorem]{Assumption}
\theoremstyle{remark}
\newtheorem{remark}[theorem]{Remark}

\newcommand{\tj}{\textit{T}_j} 

\usepackage[letterpaper, top=.75in, bottom=1.1in, left=0.75in, right=0.75in]{geometry}

\begin{document}

\title{Estimating Mixture Distributions via Stochastic Mirror Descent

\thanks{This work is supported in part by the U.S. National Science Foundation Institute for Learning-enabled Optimization at Scale (TILOS; NSF CCF-2112665).}}

\author{
\IEEEauthorblockN{Mohammadreza Ahmadypour, Tara Javidi, Farinaz Koushanfar}
\IEEEauthorblockA{\textit{Department of Electrical and Computer Engineering} \\
\textit{University of California San Diego}, San Diego, CA, USA \\
\{mahmadypour, tjavidi, fkoushanfar\}@ucsd.edu}
}

\maketitle
\begin{abstract}
We revisit the classical problem of estimating an unknown distribution from its samples by fitting a mixture model that minimizes cross-entropy loss. Framing the task as a stochastic convex optimization problem over the space of \( M \)-component mixture distributions, we propose a family of estimators derived from the stochastic mirror descent (SMD) algorithm. This optimization-based approach provides a principled and flexible framework that generalizes traditional estimators and proposes a variety of novel estimators through the choice of Bregman divergences.

A key advantage of our method is that it scales efficiently with the number of candidate components \( f_i \); that is, one can employ a large set of basis distributions in the mixture model without incurring significant computational overhead. This enables richer approximations and improved estimation accuracy.

Moreover, in the case of categorical distribution (discrete outcomes) our estimators do not require a strict lower bound, in other words our framework does not require the precise knowledge of the support of the distribution. 

We demonstrate that, under mild conditions, the proposed \( \varphi \)-SMD estimators achieve near-optimal convergence rates in both Kullback–Leibler (KL) divergence and \( \ell_2 \)-norm and offer practical benefits when computation is expensive. Our numerical analysis highlights improved performance guaranties over classical estimators, particularly in terms of sample efficiency and scalability.

\end{abstract}
\section{Introduction}

We study the problem of estimating an unknown probability distribution from sequentially observed data. We consider a setting in which independent samples $\{\zeta_i\}_{i=1}^N$ are drawn from an unknown target distribution $P^*$, which we estimate in an online fashion using a given class of mixture distributions. Performance is measured in terms of the expected Kullback--Leibler (KL) divergence from the optimal distribution within this class.

Estimating probability distributions from data is a foundational problem in statistics and machine learning, spanning both parametric and non-parametric approaches~\cite{fryer1977review, wegman1972nonparametric}. Classical methods include kernel density estimation (KDE), $k$-nearest neighbors ($k$NN), and likelihood-based procedures for parametric distributions such as expectation--maximization (EM)~\cite{wu1983convergence}. While KDE and $k$NN offer flexible nonparametric estimators with asymptotic guarantees~\cite{zhao2022analysis, jiang2017uniform}, they rely on hyper-parameters such as bandwidth or neighborhood size that require careful tuning and batch processing. Similarly, while EM-based methods are fundamental for parametric density estimation, they are inherently designed for batch data. Consequently, these methods are ill-suited for streaming or fully online settings; although incremental variants exist, their theoretical guarantees are often difficult to establish~\cite{same2007online}.

We propose an optimization-based approach that views distribution estimation through the lens of online stochastic convex optimization. Leveraging stochastic mirror descent (SMD)~\cite{nemirovski}, we construct estimators that are updated sequentially using individual observations. By selecting an appropriate distance-generating function, this framework yields a broad class of algorithms, which we refer to as $\phi$-SMD estimators.

Specializing the analysis of stochastic mirror descent, we establish finite-sample convergence guarantees for the proposed $\phi$-SMD estimators. In particular, we show that the expected KL divergence between the estimator and the best-in-class approximation of the true distribution converges at a rate of $O(1/\sqrt{N})$ under general conditions. This rate improves to $O(1/N)$ when the geometry induced by the distance-generating function yields strong convexity, such as in cases corresponding to the KL divergence or the squared $\ell_2$-norm. These results demonstrate that optimization-based online estimators can achieve statistically meaningful rates comparable to classical methods while remaining fully sequential and computationally lightweight.

This formulation enables SMD to operate naturally in a streaming fashion, requiring significantly lower memory as it avoids storing the full dataset. Furthermore, it facilitates structural constraints and regularization through the mirror map. While KDE and $k$NN often suffer from the curse of dimensionality, and EM can be sensitive to initialization, SMD provides stronger scalability and theoretical guarantees under convexity assumptions, making it particularly advantageous in large-scale sequential learning.

To complement our theoretical analysis, we present extensive empirical studies focusing on estimators induced by KL and $\ell_2$ geometries. Our experiments investigate high-dimensional and sparse categorical settings, where classical nonparametric methods and Bayesian smoothing techniques may suffer from slow convergence or hyperparameter sensitivity. We observe that the proposed online estimators exhibit robust performance and favorable convergence behavior, particularly in low-sample regimes.

\textbf{Notations:}We denote by $\Delta_M$ the probability simplex in $\mathbb{R}^M$, defined as
\[\scalebox{0.85}{$
\Delta_M
\;=\;
\left\{
\mathbf{m} \in \mathbb{R}^M \;:\;
\sum_{i=1}^M m_i = 1,
\quad \text{for all } i=1,\dots,M
\right\}.$}
\]
Throughout this paper, we use boldface letters to denote vectors. Moreover, $\mathrm{KL}(\cdot \| \cdot)$ denotes the Kullback--Leibler (KL) divergence.

\section{Problem Statement }
\label{sec:statement}


Let \( P^* \) denote the true but unknown distribution generating samples from a sample space \( \Omega \) defined over an event space \( \mathcal{X} \). Our objective is to design an online estimator that approximates \( P^* \) as new samples arrive. At each time step \( i \), we observe a random variable \( \zeta_i \in \Omega \), drawn independently and identically distributed (i.i.d.) from \( P^* \).

\vspace{0.5em}
\noindent
Our approach views this estimation task through an optimization lens over a kernelized space of densities. We approximate \( P^* \) using an over-complete mixture model of the form
\begin{align}
    Q(\zeta) = \sum_{i=1}^M m_i f_i(\zeta),
\end{align}
where \( \{f_i\}_{i \in [M]} \) represents a possibly redundant dictionary of densities, and \( m = (m_1, \dots, m_M) \in \Delta_M \) denotes the mixture weights constrained to the probability simplex. The dictionary \( \{f_i, i \in [M]\} \) can be quite large.

\vspace{0.5em}
\noindent
We seek to identify the mixture \( Q \) that minimizes the cross-entropy (equivalently, the KL divergence) between \( P^* \) and the mixture model:
\begin{align}
\label{eq:main_problem}
\textbf{(P1)} \qquad 
\min_{m \in \Delta_M} \; \mathbb{E}_{P^*} 
\left[\log \frac{1}{\sum_{i=1}^M m_i f_i(\zeta)} \right].
\end{align}

\vspace{0.5em}
\noindent
This is a convex objective function over a convex and compact set \(\Delta_M\). We denote the unique optimal solution by \( m^* \), and the corresponding optimal mixture approximation by
\begin{align}
    Q^*(\zeta) = \sum_{i=1}^M m^*_i f_i(\zeta).
\end{align}
It is straightforward to verify that \( Q^* \) also minimizes the Kullback–Leibler divergence within the class of mixture models \( D_{\mathrm{KL}}(P^* \,\|\, Q) \).

\vspace{0.5em}
\noindent
To ensure that the optimization problem~\textbf{(P1)} remains strongly convex and that each step of stochastic gradient descent is well defined and finite, we impose the following assumptions.

\begin{assumption}\label{assumption 2}
For all \( i,j \in [M] , \ \forall \ \zeta \in Z \), there exists a constant \( G_\infty > 0 \) such that
\[
\frac{f_i(\zeta)}{f_j(\zeta)} \ge \frac{1}{G_\infty}.
\]
\end{assumption}

This assumption ensures that the average stochastic gradient remains bounded at each iteration and is easy to verify in many practical settings. 

\begin{assumption}\label{assumption 1}
There exist an event \( A \in \mathcal{F} \) with nonzero probability \( P^*(A) > 0 \),  such that \(\{ f_i(\cdot)\big|_{A}\}\) are linearly independent. 
\end{assumption}

\vspace{0.5em}
\noindent
Next we show that under these assumptions, Problem~\textbf{(P1)} is strongly convex in the mixture weights \( m \). This will allow us to use \textbf{Stochastic Mirror Descent (SMD)}.

\begin{lemma} \label{lemma SC}
    Problem \textbf{(P1)} is a stochastic program with a strongly convex objective function. 
\end{lemma}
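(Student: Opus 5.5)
We prove the claim by computing the Hessian of the objective $F(m)=\mathbb{E}_{P^*}[-\log \langle m,\mathbf{f}(\zeta)\rangle]$, where $\mathbf{f}(\zeta)=(f_1(\zeta),\dots,f_M(\zeta))^\top$. We then bound it from below uniformly on $\Delta_M$.

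\textbf{Step 1 (differentiation).} For each fixed $\zeta$, the integrand $g(m,\zeta)=-\log\langle m,\mathbf{f}(\zeta)\rangle$ is smooth in $m$ on $\Delta_M$. It has gradient $-\mathbf{f}(\zeta)/\langle m,\mathbf{f}(\zeta)\rangle$ and Hessian
\[
\nabla^2 g(m,\zeta)=\frac{\mathbf{f}(\zeta)\mathbf{f}(\zeta)^\top}{\langle m,\mathbf{f}(\zeta)\rangle^2}.
\]
Assumption~\ref{assumption 2} gives $\langle m,\mathbf{f}(\zeta)\rangle\ge \min_j f_j(\zeta)\ge f_i(\zeta)/G_\infty$ for every $i$. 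Hence each entry of the normalized vector $\mathbf{u}(\zeta)=\mathbf{f}(\zeta)/\langle m,\mathbf{f}(\zeta)\rangle$ lies in $[0,G_\infty]$, and $\mathbf{u}$ is also bounded below by $1/G_\infty$ entrywise. So the gradient and Hessian of $g$ are uniformly bounded. Dominated convergence then justifies exchanging $\nabla^2$ and $\mathbb{E}$, giving $\nabla^2F(m)=\mathbb{E}[\mathbf{u}\mathbf{u}^\top]$. In particular $F$ is convex, which settles the stochastic-program part.

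\textbf{Step 2 (positive definiteness).} For $v\in\mathbb{R}^M$ we have
\[
v^\top\nabla^2F(m)v=\mathbb{E}\!\left[\frac{\langle v,\mathbf{f}(\zeta)\rangle^2}{\langle m,\mathbf{f}(\zeta)\rangle^2}\right]\ge \mathbb{E}\!\left[\mathbf{1}_A\frac{\langle v,\mathbf{f}\rangle^2}{\langle m,\mathbf{f}\rangle^2}\right].
\]
Rather than bounding the denominator by $\max_j f_j$, which need not be bounded, we use the ratio control above: $\langle v,\mathbf{f}\rangle/\langle m,\mathbf{f}\rangle=\langle v,\mathbf{w}(\zeta)\rangle$, where $\mathbf{w}=\mathbf{f}/\langle m,\mathbf{f}\rangle$. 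We then normalize instead by $S(\zeta)=\sum_j f_j(\zeta)$. Since $\langle m,\mathbf{f}\rangle\le S$, we get
\[
v^\top\nabla^2F(m)v\ge \mathbb{E}\big[\mathbf{1}_A\langle v,\tilde{\mathbf{f}}\rangle^2\big]=v^\top\Sigma_A v,
\]
where $\tilde{\mathbf{f}}=\mathbf{f}/S$ has entries in $[1/(MG_\infty),1]$ and $\Sigma_A=\mathbb{E}[\mathbf{1}_A\tilde{\mathbf{f}}\tilde{\mathbf{f}}^\top]$. The matrix $\Sigma_A$ is well defined, finite, and independent of $m$. Suppose $v^\top\Sigma_A v=0$. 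Then $\langle v,\mathbf{f}(\zeta)\rangle=0$ for $P^*$-a.e.\ $\zeta\in A$. Assumption~\ref{assumption 1}, read as linear independence of $\{f_i|_A\}$ modulo $P^*$-null sets, forces $v=0$. Thus $\lambda_{\min}(\Sigma_A)=:\mu>0$, and $\nabla^2F(m)\succeq\mu I$ on $\Delta_M$. This is strong convexity, with modulus $\mu$ in $\ell_2$ and hence modulus $\mu/M$ or better with respect to $\ell_1$, as needed by SMD. Uniqueness of $m^*$ follows.

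\textbf{Main obstacle.} The delicate point is Step 2: getting a lower bound that is uniform in $m$ and does not assume the $f_i$ are bounded. The normalization by $S(\zeta)$ handles this. A second subtlety is interpreting Assumption~\ref{assumption 1} in the almost-sure sense. If it only held pointwise, one would need to note that it must hold $P^*$-a.e.\ for the conclusion; we state this interpretation explicitly.
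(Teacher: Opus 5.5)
Your proof is correct and follows the same skeleton as the paper's. You write $\nabla^2F(m)=\mathbb{E}[\mathbf{u}\mathbf{u}^\top]$, remove the $m$-dependence from the denominator, and use the linear-independence assumption on $A$ to obtain an $m$-independent positive definite lower bound.

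The genuine difference is the denominator step. The paper replaces $\langle m,\mathbf{f}(\zeta)\rangle$ by the constant $\max_i\sup_\zeta f_i(\zeta)$ and then needs $\min_{\|v\|=1}\mathbb{E}[\langle v,\mathbf{f}\rangle^2]>0$. This silently assumes the $f_i$ are bounded, which the ratio assumption it cites does not give, and that $f_i\in L^2(P^*)$. For an unbounded dictionary the paper's bound degenerates.

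Your pointwise bound $\langle m,\mathbf{f}\rangle\le S(\zeta)$ produces $\Sigma_A$ with entries in $[0,1]$. So finiteness and $m$-independence come for free under the stated assumptions alone. In the bounded case, e.g.\ the Gaussian dictionary of the experiments, the paper's route gives a modulus stated directly as $\lambda_{\min}(\mathbb{E}[\mathbf{f}\mathbf{f}^\top])/(\max_i\sup_\zeta f_i)^2$. Yours needs no boundedness at all.

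You also make explicit two points the paper glosses over:
\begin{itemize}
\item \emph{Differentiation under $\mathbb{E}$}, justified via $u_i\le G_\infty$. Strictly, you also want $F$ finite at one point. That then holds on all of $\Delta_M$, since $|\log\langle m,\mathbf{f}\rangle-\log\langle m',\mathbf{f}\rangle|\le\log G_\infty$.
\item \emph{Reading the linear-independence assumption modulo $P^*$-null sets.} The paper's contradiction argument actually requires this reading, because $\langle v,\mathbf{f}\rangle=0$ a.s.\ on $A$ does not contradict merely pointwise independence.
\end{itemize}

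One caution on your closing remark. $\ell_1$-strong convexity does not yield the KL-relative inequality $\langle\nabla F(m),m-m^*\rangle\ge F(m)-F(m^*)+\nu D_{KL}(m^*\|m)$ used in the proof of Theorem~\ref{thm2}, because Pinsker's inequality points the wrong way. That issue lies outside this lemma, though.
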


We provide the proof of this lemma in the Appendix.

\section{Stochastic Mirror Descent Estimators}
\label{sec:estimators}

We will use the Mirror Descent algorithm to optimize the choice of mixture distributions. This method offers great deal of flexibility in choosing geometries and step sizes. 

Our first Algorithm simply specializes the stochastic mirror descent algorithm with distance generating function $\phi(\cdot)$ to iteratively obtain a sequence of $M-$dimensional vector representing the weights , $m^i, i=1,2, \ldots$ and then uses  
 the Cesàro sum of the 
sequence
$\tilde{\mathbf{m}}^{n}= \frac{1}{n}\sum_{i=1}^n\mathbf{m}^i$
to obtain an estimator. The Pseudo-code for Algorithm \ref{alg smd} is given below.

\begin{algorithm}
\caption{$\phi$-SMD Estimator} \label{alg smd}
\textbf{Inputs:} Distance generating function $\phi(\cdot)$, initial estimate $m^0$, sequence of step sizes $\gamma_i$, and observations $\zeta_i \in \Omega $ for $i=1,2, \ldots$ N \\
\textbf{Output:} Estimator  $Q_\phi^{n+1}=\sum_{i=1}^M\tilde{m}^{n+1}_i.f_i$ \\ 
\For{$i = 0$ \textbf{to} $n$}{

    \begin{gather}
    \mathbf{m}^{i+1} = \arg \min_{\mathbf{z} \in \Delta_M} \left[ D_\phi(\mathbf{m}, \mathbf{z}) \right. \ \ \ \ \ \  \ \ \ \ \  \ \ \ \ \ \ \ \ \ \ \ \nonumber \\
    \left.   \ \ \  + \gamma_i   
    \begin{bmatrix}
    \frac{f_1(\zeta)}{\sum_{\tau=1}^Mm^i_\tau f_\tau(\zeta)}, \dots, \frac{f_M(\zeta)}{\sum_{\tau=1}^Mm^i_\tau f_\tau(\zeta)}
    \end{bmatrix} 
    (\mathbf{m} - \mathbf{z})
    \right] \label{eq:updaterule}  
    \end{gather}
    \begin{gather*}
        \tilde{\mathbf{m}}^i = \frac{\mathbf{m}^i}{i}+\frac{i-1}{i}\tilde{\mathbf{m}}^{i-1}
    \end{gather*}
}
\end{algorithm}

The algorithm \ref{alg smd} remains effective even in the presence of noisy observations, provided that the noise is unbiased. In such cases, the algorithm can still be reliably applied.

By expressing  $\phi=\frac{1}{2}\|\mathbf{m}\|_2^2$ as  our distance generating function, the resulting Bregman distance simplifies to the $\ell_2$ distance. and SMD reduces to a simple Stochastic Gradient Descent as can be seen in Algorithm \ref{alg sgd}:

\begin{algorithm}
\caption{SGD Estimator} \label{alg sgd}
\textbf{Inputs:} Initial estimate $m^0$, sequence of step sizes $\gamma_i$, and observations $\zeta_i  \in \Omega$ for $i=1,2, \ldots$ N \\
\textbf{Output:} Estimator $Q_{SGD}^N =\sum_{i=1}^M m^{N+1}_i.f_i$ \\ 
\For{$i = 0$ \textbf{to} $n$}{

    \begin{gather}\label{eq sgd}
    \mathbf{m}^{i+1} = \\ \nonumber 
    \Pi_{\Delta_M} \left(m^{i} - \gamma_i\begin{bmatrix}
    \frac{f_1(\zeta)}{\sum_{\tau=1}^Mm^i_\tau f_\tau(\zeta)}, \dots, \frac{f_M(\zeta)}{\sum_{\tau=1}^Mm^i_\tau f_\tau(\zeta)}
    \end{bmatrix}\right)
    \end{gather}
}
\end{algorithm}

When \( \phi \) is selected to be \( \sum_{i=1}^M m_i \log m_i \), the corresponding Bregman distance reduces to the Kullback-Leibler (KL) divergence. Interestingly, this results in a closed form which resembles a Bayesian update but with an exponential form. Again below we provide pseudo-code for Algorithm~\ref{alg weak}:

\begin{algorithm}
\caption{Exp-SMD Estimator} \label{alg weak}
\textbf{Inputs:} $\forall i\in [1,\dots,n], \zeta_i  \in \Omega$, Prior Distribution $m^0$ \\
\textbf{Output:} Estimator  $Q_{KL}^N = \sum_{i=1}^Mm^{N+1}_i.f_i$ \\
\For{$i = 0$ \textbf{to} $n$}{
            \begin{align*}
                m^{i+1}_j= \frac{m^{i}_j \ \ e^{\gamma_i \ \ \frac{f_j(\zeta)}{\sum_{\tau=1}^Mm^i_\tau f_\tau(\zeta)}}}{\sum_{\tau'=1}^{M}\left(m^{i}_{\tau'} \ \ e^{\gamma_{i} \ \ \frac{f_{\tau'}(\zeta)}{\sum_{\tau=1}^Mm^i_\tau f_\tau(\zeta)}}\right)}
            \end{align*}
        
        }
\end{algorithm}

Notice that, we have the flexibility to choose step-size $\gamma_i$ as a hyper parameter controlling the algorithm dynamic hence the rate of convergence. Extensive discussion regarding good choices of step size is provided in\cite{painless}. However, in this paper, we leave the question of how to best select varying step sizes as an important direction for future work.


\section{Main Results: Convergence Rates}
\label{sec:results}

As a direct corollary of Equation~(2.46) in~\cite{nemirovski}, we obtain the following proposition by defining
\begin{align} \label{delta_over_S}
R_{\phi}(\mathcal{S}) := \left[ \max_{z \in \mathcal{S}} \phi(z) - \min_{z \in \mathcal{S}} \phi(z) \right]^{1/2},
\end{align}
and utilizing the fact that the objective function is convex, we can obtain a general upper bound on the performance of our general estimator after observing N samples:

\begin{proposition}\label{prop}
Let the step size, in \textbf{Algorithm~\ref{alg smd}}, be set to 
\(
\gamma_i = \frac{ R_\phi(\Delta_M)}{ G_\infty \sqrt{N}}, i = 1, \dots, N.
\)
Then, 
\begin{align} \label{eq weak}
\mathbb{E} \left[\log\frac{Q^*}{ Q_{\phi}^N} \right] \leq \frac{R_\phi(\Delta_M) G_\infty}{\sqrt{N}}.
\end{align}
\end{proposition}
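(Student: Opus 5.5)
The plan is to recast the left-hand side of \eqref{eq weak} as an optimality gap of \textbf{(P1)} and then invoke the standard SMD guarantee, Equation~(2.46) in~\cite{nemirovski}.

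\textbf{Step 1: optimality gap.} Write $F(\mathbf{m}) := \mathbb{E}_{P^*}\big[-\log \sum_{i} m_i f_i(\zeta)\big]$ for the objective of \textbf{(P1)}. Because $Q_\phi^N$ and $Q^*$ are linear in their weights, $\mathbb{E}_{P^*}[\log (Q^*/Q_\phi^N)] = F(\tilde{\mathbf{m}}^N) - F(\mathbf{m}^*)$. The outer expectation in \eqref{eq weak} also averages over the samples that produced $\tilde{\mathbf{m}}^N$. So the claim is exactly $\mathbb{E}[F(\tilde{\mathbf{m}}^N)] - F(\mathbf{m}^*) \le R_\phi(\Delta_M) G_\infty/\sqrt{N}$.

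\textbf{Step 2: verify the SMD hypotheses.} Four conditions are needed.
\begin{itemize}
\item $F$ is convex on the compact convex set $\Delta_M$; Lemma~\ref{lemma SC} gives more than this.
\item The stochastic oracle $G(\mathbf{m},\zeta) = -\big(f_j(\zeta)/\sum_\tau m_\tau f_\tau(\zeta)\big)_{j}$ is unbiased for $\nabla F(\mathbf{m})$. This follows by differentiating under the expectation, which is justified by the bound below.
\item The oracle is bounded, using Assumption~\ref{assumption 2}. Since $\sum_\tau m_\tau f_\tau(\zeta) \ge \min_\tau f_\tau(\zeta)$, each coordinate satisfies $f_j(\zeta)/\sum_\tau m_\tau f_\tau(\zeta) \le \max_{j,\tau} f_j(\zeta)/f_\tau(\zeta) \le G_\infty$. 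Hence $\|G(\mathbf{m},\zeta)\|_* \le G_\infty$ in the $\ell_\infty$ dual norm, uniformly in $\mathbf{m}$ and $\zeta$. This is the norm dual to the $\ell_1$ geometry appropriate for the simplex.
\item The distance generating function $\phi$ is $1$-strongly convex with respect to the chosen norm, and the prox-radius is bounded by $R_\phi(\Delta_M)$ as in \eqref{delta_over_S}. The latter holds by the standard bound $D_\phi(\mathbf{m}^*, \mathbf{m}^0) \le \max\phi - \min\phi$ when $\mathbf{m}^0$ is the prox-center.
\end{itemize}

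\textbf{Step 3: apply the bound.} With constant step size $\gamma$, the SMD analysis telescopes the three-point inequality to give
\[
\mathbb{E}\big[F(\tilde{\mathbf{m}}^N)\big] - F(\mathbf{m}^*) \le \frac{R_\phi^2 + \tfrac{1}{2} N\gamma^2 G_\infty^2}{N\gamma}.
\]
Substituting $\gamma = R_\phi/(G_\infty\sqrt N)$ yields a constant multiple of $R_\phi G_\infty/\sqrt N$. Matching the constant exactly to the stated $R_\phi G_\infty/\sqrt N$ depends on Nemirovski's normalization of $\Omega$ and $R_\phi$ (their $\sqrt{2}$ factors), so I would follow (2.46) literally.

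\textbf{Main obstacle.} The delicate part is the bounded-gradient and unbiasedness step together with this constant bookkeeping. In particular, I need to make sure the supremum over $\zeta$ in Assumption~\ref{assumption 2} controls the dual norm uniformly on all of $\Delta_M$, including its boundary. The bound $\sum_\tau m_\tau f_\tau \ge \min_\tau f_\tau$ handles this cleanly, since it holds for every point of the simplex.
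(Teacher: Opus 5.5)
Your route is the paper's route: its whole proof is the citation of Nemirovski et al.'s (2.46). Your reduction to $\mathbb{E}[F(\tilde{\mathbf m}^N)]-F(\mathbf m^*)$, the bound $\|G(\mathbf m,\zeta)\|_\infty\le G_\infty$ from Assumption~\ref{assumption 2}, and the prox-center bound on $D_\phi(\mathbf m^*,\mathbf m^0)$ are all sound. The gap is the step you defer as constant bookkeeping. It is not bookkeeping, and following (2.46) literally does not close it. Your own telescoped bound, at the prescribed step $\gamma=R_\phi/(G_\infty\sqrt N)$, evaluates to
\[
\frac{R_\phi^2+\tfrac12 R_\phi^2}{R_\phi\sqrt N/G_\infty}=\frac{3}{2}\,\frac{R_\phi G_\infty}{\sqrt N}.
\]
Minimizing $R_\phi^2/(N\gamma)+\tfrac12\gamma G_\infty^2$ over all $\gamma>0$ only reaches $\sqrt2\,R_\phi G_\infty/\sqrt N$, attained at $\gamma=\sqrt2\,R_\phi/(G_\infty\sqrt N)$. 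That is exactly the cited estimate $R_\phi G_\infty\sqrt{2/(\alpha N)}$ with modulus $\alpha=1$. So with $R_\phi$ defined as in the paper, this analysis cannot produce the constant $1$. The proposition (step size and bound together) coincides with the cited estimate only if $R_\phi^2$ is taken to be $2\bigl(\max_{\Delta_M}\phi-\min_{\Delta_M}\phi\bigr)$. Either the paper's definition of $R_\phi$ or its stated constants have dropped a factor $\sqrt2$, and the one-line proof inherits this. You should say so explicitly and prove one of two things: the $\tfrac32$ bound at the stated step, or the stated bound with $R_\phi$ rescaled by $\sqrt2$.

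Your fourth bullet also hides a restriction on $\phi$. Pairing $G_\infty$ with the $\ell_\infty$ dual norm requires $\phi$ to be $1$-strongly convex with respect to $\|\cdot\|_1$ on $\Delta_M$. That holds for the entropy (Pinsker), but the proposition is stated for arbitrary $\phi$, and it fails for the paper's own Euclidean choice. Indeed, $\tfrac12\|\cdot\|_2^2$ has modulus only of order $1/M$ with respect to $\|\cdot\|_1$. For even $M$, take $\mathbf m,\mathbf m'$ uniform on complementary halves of the coordinates: then $\|\mathbf m-\mathbf m'\|_1=2$ while $\|\mathbf m-\mathbf m'\|_2^2=4/M$. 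Switching to the $\ell_2$ pairing instead replaces $G_\infty$ by the bound $\sqrt M\,G_\infty$ on $\|G\|_2$, which is what Assumption~\ref{assumption 2} provides. Either way the route loses a factor $\sqrt M$. So the statement must be restricted to $\phi$ that are $1$-strongly convex with respect to $\|\cdot\|_1$, or restated with the corresponding dual-norm constant. The paper leaves this implicit too.
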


Next, we strengthen our analysis by deriving two theorems for specific choices of the distance-generating function: \( \phi(x) = \sum x \log x \) (corresponding to KL divergence) and \( \phi(x) = \frac{1}{2}\|x\|_2^2 \) (corresponding to Euclidean distance), while exploiting the strong-convexity of our objective function gives us some flexibility to improve the convergence bound.

\begin{theorem} \label{thm1}
    Let $Q_{SGD}^N=\sum_{i=1}^M m^{N+1}_i.f_i$ be the output of 
    \textbf{Algorithm~\ref{alg sgd}} when step size $\gamma_i = \frac{2}{\nu(i+1)}$ where \(\nu := \lambda_{\min}\) is our strongly convex parameter .. The mean squared error is upper bounded by: 
\begin{align}\label{eq4}
    \lefteqn{\mathbb{E} \left[ \|\mathbf{m}^N- \mathbf{m}^*\|_2^2 \right]} \\ \nonumber &\leq \frac{1}{N+1}\,
\max\!\left\{\, R_{\ell_2}(\Delta_M),\ \frac{8\,G_\infty^2}{\nu^2} \,\right\}
\end{align}
\end{theorem}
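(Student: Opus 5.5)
The plan is the standard projected-SGD argument under strong convexity (Nemirovski et al., Sec.~2.1), applied to the objective $F(\mathbf m)=\mathbb E_{P^*}[-\log\sum_i m_i f_i(\zeta)]$ of \textbf{(P1)}. The conclusion is then reached by induction on a one-step recursion for $a_i:=\mathbb E\|\mathbf m^i-\mathbf m^*\|_2^2$.

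\textbf{One-step recursion.} Write $G(\mathbf m,\zeta)$ for the stochastic gradient used in~\eqref{eq sgd}, with the sign convention making it an unbiased estimate of $\nabla F(\mathbf m)$. Since $\mathbf m^*\in\Delta_M$ and $\Pi_{\Delta_M}$ is nonexpansive,
\begin{align*}
\|\mathbf m^{i+1}-\mathbf m^*\|_2^2 \le \|\mathbf m^i-\mathbf m^*\|_2^2 - 2\gamma_i\langle G(\mathbf m^i,\zeta_i),\mathbf m^i-\mathbf m^*\rangle + \gamma_i^2\|G(\mathbf m^i,\zeta_i)\|_2^2 .
\end{align*}
I will condition on $\zeta_1,\dots,\zeta_{i-1}$; because $\zeta_i$ is independent of $\mathbf m^i$, the cross term has conditional mean $\langle\nabla F(\mathbf m^i),\mathbf m^i-\mathbf m^*\rangle$.

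\textbf{Using strong convexity and the gradient bound.} By Lemma~\ref{lemma SC}, $F$ is $\nu$-strongly convex with $\nu=\lambda_{\min}$. Together with optimality of $\mathbf m^*$ over $\Delta_M$, i.e.\ $\langle\nabla F(\mathbf m^*),\mathbf m-\mathbf m^*\rangle\ge 0$, this gives
\begin{align*}
\langle\nabla F(\mathbf m),\mathbf m-\mathbf m^*\rangle\ge \nu\|\mathbf m-\mathbf m^*\|_2^2 .
\end{align*}
For the second moment I use Assumption~\ref{assumption 2}. Since $\mathbf m\in\Delta_M$, each coordinate satisfies $f_j(\zeta)/\sum_\tau m_\tau f_\tau(\zeta)\le\max_\tau f_j/f_\tau\le G_\infty$. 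This yields a bound $\mathbb E\|G\|_2^2\le G^2$ with $G^2$ of order $G_\infty^2$; the exact constant (for example, whether a factor $M$ from the $\ell_2$ norm appears) is a detail I will pin down so that it matches the $8G_\infty^2/\nu^2$ in~\eqref{eq4}. Combining these,
\begin{align*}
a_{i+1}\le (1-2\nu\gamma_i)\,a_i+\gamma_i^2 G^2 .
\end{align*}

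\textbf{Induction.} With $\gamma_i=2/(\nu(i+1))$ the recursion becomes $a_{i+1}\le(1-\tfrac{4}{i+1})a_i+\tfrac{4G^2}{\nu^2(i+1)^2}$. I claim $a_i\le C/(i+1)$ with $C=\max\{R_{\ell_2}(\Delta_M),\,8G^2/\nu^2\}$. The base case follows because the diameter of $\Delta_M$ bounds $a_0$ (up to the paper's normalization of $R_{\ell_2}$). For the inductive step, it suffices that
\begin{align*}
\Bigl(1-\tfrac{4}{i+1}\Bigr)\frac{C}{i+1}+\frac{4G^2}{\nu^2(i+1)^2}\le\frac{C}{i+2},
\end{align*}
which reduces to $C\,(3i+2)\ge \tfrac{4G^2}{\nu^2}(i+2)$ and holds once $C\ge 8G^2/\nu^2$. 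For small $i$ the factor $1-4/(i+1)$ is negative, and in that case I simply drop the $a_i$ term.

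\textbf{Main obstacle.} The delicate step is justifying that the strong convexity from Lemma~\ref{lemma SC} holds uniformly on all of $\Delta_M$ with constant $\nu$. The Hessian is $\mathbb E[\mathbf f\mathbf f^\top/(\mathbf m^\top\mathbf f)^2]$, so $\lambda_{\min}$ must be taken as an infimum over $\mathbf m$. Assumption~\ref{assumption 2} should keep $\mathbf m^\top\mathbf f$ comparable to each $f_j$, and Assumption~\ref{assumption 1} should give positive definiteness on $A$. I will take $\nu$ to be that infimum as given by the lemma.
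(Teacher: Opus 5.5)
Your proposal is correct and is essentially the paper's proof. It uses nonexpansiveness of $\Pi_{\Delta_M}$, conditioning on the past, and strong convexity plus first-order optimality of $\mathbf m^*$ to reach $a_{i+1}\le(1-2\nu\gamma_i)a_i+\gamma_i^2G^2$; the paper tracks $\tfrac12\|\mathbf m^i-\mathbf m^*\|_2^2$ instead, which is cosmetic. It then runs the same induction, split at the sign change of $1-4/(i+1)$ and with the same constant $8G^2/\nu^2$; your reduced inequality should read $C(3i+7)\ge\tfrac{4G^2}{\nu^2}(i+2)$, which your version implies.

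On your loose ends: the paper does not derive the gradient bound from Assumption~\ref{assumption 2}; it simply assumes $\mathbb{E}\|G(\mathbf m,\zeta)\|_2^2\le G_\infty^2$, and your suspected factor $M$ is genuine, since at $\mathbf m=\mathbf 1/M$ one has $|\langle G(\mathbf m,\zeta),\mathbf 1\rangle|=M$ and hence $\|G(\mathbf m,\zeta)\|_2^2\ge M$. Its base case uses $\|\mathbf m^0-\mathbf m^*\|_2^2$ rather than $R_{\ell_2}(\Delta_M)=\sqrt{(M-1)/(2M)}$, which your diameter remark would not cover because $a_0$ can reach $2$. The first entry of the max is moot anyway: your inequality $\langle\nabla F(\mathbf m),\mathbf m-\mathbf m^*\rangle\ge\nu\|\mathbf m-\mathbf m^*\|_2^2$ together with $\|\nabla F(\mathbf m)\|_2\le G$ (Jensen) already gives $a_0\le G^2/\nu^2$.
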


\begin{theorem}\label{thm2}
    Let $Q_{KL}^N=\sum_{i=1}^M m^{N+1}_i.f_i$ be the output of \textbf{Algorithm \ref{alg weak}} when step size $\gamma_i = \frac{2}{\nu(i+1)}$ where \(\nu := \lambda_{\min}\) is our strongly convex parameter. The expected error measured in KL-divergence is given as :
\begin{align}
\lefteqn{ \mathbb{E}_{P^*}\left[ KL(\mathbf{m}^*||\mathbf{m}^N )\right] } \nonumber \\ & \leq \frac{1}{N+1}\, \max\!\left\{ R_{KL}(\Delta_M),\ \frac{2 G_\infty^2}{\nu^2} \right\}
\end{align}

\end{theorem}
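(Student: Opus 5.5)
We follow the standard strongly convex stochastic mirror descent argument, now in the entropic geometry. Write $F(\mathbf{m}) = \mathbb{E}_{P^*}[-\log \sum_\tau m_\tau f_\tau(\zeta)]$. Let $g^i$ be the stochastic gradient used in Algorithm~\ref{alg weak} (up to the sign convention of \eqref{eq:updaterule}), so that $\mathbb{E}[g^i\mid \mathbf{m}^i] = \nabla F(\mathbf{m}^i)$. Set $D_i := \mathbb{E}[\mathrm{KL}(\mathbf{m}^*\|\mathbf{m}^i)]$. The goal is a one-step recursion
\begin{align*}
D_{i+1} \le (1-\nu\gamma_i) D_i + \tfrac{\gamma_i^2}{2}\,\mathbb{E}\|g^i\|_\infty^2 ,
\end{align*}
which we then unroll by induction with $\gamma_i = 2/(\nu(i+1))$.

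\textbf{Step 1 (three-point inequality).} The entropic prox step is the exact minimizer over $\Delta_M$, and $\phi=\sum m\log m$ is $1$-strongly convex with respect to $\|\cdot\|_1$ (Pinsker). The standard mirror descent lemma therefore gives
\begin{align*}
\mathrm{KL}(\mathbf{m}^*\|\mathbf{m}^{i+1}) \le \mathrm{KL}(\mathbf{m}^*\|\mathbf{m}^i) - \gamma_i\langle g^i,\mathbf{m}^i-\mathbf{m}^*\rangle + \tfrac{\gamma_i^2}{2}\|g^i\|_\infty^2 .
\end{align*}
Each coordinate of $g^i$ is $f_j/\sum_\tau m^i_\tau f_\tau$, which Assumption~\ref{assumption 2} bounds by $G_\infty$. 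Hence $\|g^i\|_\infty^2\le G_\infty^2$.

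\textbf{Step 2 (strong convexity in KL).} Taking conditional expectations turns the linear term into $\langle\nabla F(\mathbf{m}^i),\mathbf{m}^i-\mathbf{m}^*\rangle$. Lemma~\ref{lemma SC} gives $\lambda_{\min}=\nu$ strong convexity, and together with optimality of $\mathbf{m}^*$ this yields
\begin{align*}
\langle\nabla F(\mathbf{m}^i),\mathbf{m}^i-\mathbf{m}^*\rangle \ge \nu\|\mathbf{m}^i-\mathbf{m}^*\|_2^2 .
\end{align*}
To close the recursion, this must be compared with $\mathrm{KL}(\mathbf{m}^*\|\mathbf{m}^i)$, i.e.\ we need $\nu\|\cdot\|^2$ to dominate $\nu\,\mathrm{KL}$. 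This is the main obstacle: KL is only lower-bounded by a squared norm, not upper-bounded, unless the iterates stay away from the simplex boundary. I would first try to prove directly that $F$ is $\nu$-strongly convex relative to $\phi$, in the sense $F(\mathbf{y})\ge F(\mathbf{x})+\langle\nabla F(\mathbf{x}),\mathbf{y}-\mathbf{x}\rangle+\nu D_\phi(\mathbf{y},\mathbf{x})$. I would attempt this via the Hessian, using $\nabla^2 F = \mathbb{E}[\mathbf{f}\mathbf{f}^\top/Q^2]$ together with the ratio bound of Assumption~\ref{assumption 2}. If that fails, the fallback is to interpret $\nu$ as the relative strong convexity constant, which is what the statement's use of $\nu$ in the KL geometry implicitly requires. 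Summing the relative inequality at $(\mathbf{m}^*,\mathbf{m}^i)$ and $(\mathbf{m}^i,\mathbf{m}^*)$ gives $\langle\nabla F(\mathbf{m}^i),\mathbf{m}^i-\mathbf{m}^*\rangle\ge \nu D_\phi(\mathbf{m}^*,\mathbf{m}^i)$, which is the required contraction.

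\textbf{Step 3 (induction).} With $\gamma_i=2/(\nu(i+1))$ the recursion becomes
\begin{align*}
D_{i+1}\le \Bigl(1-\tfrac{2}{i+1}\Bigr)D_i+\tfrac{2G_\infty^2}{\nu^2(i+1)^2}.
\end{align*}
Let $C=\max\{R_{KL}(\Delta_M),2G_\infty^2/\nu^2\}$; the base case $D_1\le C/2$ follows from the definition of $R_{KL}$ via the initialization. Assuming $D_i\le C/i$, we obtain
\begin{align*}
D_{i+1}\le \frac{C(i-1)}{i(i+1)}+\frac{C}{(i+1)^2}\le \frac{C}{i+1},
\end{align*}
where the last inequality holds because $\frac{i-1}{i}+\frac{1}{i+1}\le 1$. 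Taking $i=N$ yields the bound of Theorem~\ref{thm2}.
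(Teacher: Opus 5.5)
\textbf{Verdict: genuine gap, shared with the paper.} Your route is the paper's route. Your Step 1 is its Steps 1--3: prox optimality, the three-point identity, H\"older/Young with Pinsker, and $\|g^i\|_\infty\le G_\infty$ from Assumption~\ref{assumption 2}. Your Step 3 is its Step 5.

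The genuine gap is the one you flag in Step 2, and the paper does not close it either. Its Step 4 simply asserts $\langle\nabla F(\mathbf{m}^t),\mathbf{m}^t-\mathbf{m}^*\rangle\ge F(\mathbf{m}^t)-F(\mathbf{m}^*)+\nu\,D_{KL}(\mathbf{m}^*\|\mathbf{m}^t)$ with $\nu=\lambda_{\min}$. That is exactly your fallback, and nothing links it to Lemma~\ref{lemma SC}.

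Your Hessian attempt cannot succeed, and the fallback cannot hold uniformly over the relative interior of $\Delta_M$ for any $\nu>0$ (when $M\ge 2$). For interior $x$, Assumption~\ref{assumption 2} gives $\|\nabla F(x)\|_\infty\le G_\infty$, hence $\langle\nabla F(x),x-\mathbf{m}^*\rangle\le 2G_\infty$. Yet $\mathrm{KL}(\mathbf{m}^*\|x)\to\infty$ as $x_j\to 0$ for any $j$ with $m^*_j>0$. In Hessian terms, $\nabla^2F=\mathbb{E}[\mathbf{f}\mathbf{f}^\top/Q^2]$ is bounded while $\nabla^2\phi=\mathrm{diag}(1/m_j)$ is not, so no bound $\nabla^2F\succeq\nu\nabla^2\phi$ survives near the boundary.

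So what you (and the paper) prove is conditional: \emph{if} $\langle\nabla F(\mathbf{m}^i),\mathbf{m}^i-\mathbf{m}^*\rangle\ge\nu\,\mathrm{KL}(\mathbf{m}^*\|\mathbf{m}^i)$ holds along the iterates, the rate follows. An unconditional version needs the iterates to stay where $\min_j m^i_j\ge\delta$. One way is to run the entropic prox step on $\{\mathbf{m}\in\Delta_M: m_j\ge\delta\}$, assuming this set contains $\mathbf{m}^*$. There, $\mathrm{KL}(p\|q)\le\chi^2(p\|q)\le\|p-q\|_2^2/\min_j q_j$ turns your Euclidean inequality into the KL contraction with $\nu\delta$ in place of $\nu$. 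The step size and final constant change accordingly. The plain multiplicative update gives no such uniform $\delta$.

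\textbf{Smaller points on Step 3.} Starting the induction at $i=1$, where $(i-1)/(i+1)\ge 0$, is more careful than the paper's Step 5. That step multiplies $a_t\le B/(t+1)$ by the negative coefficient $1-2/(t+1)=-1$ at $t=0$.

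Your base case, however, must read $D_1\le C$, since your hypothesis is $D_i\le C/i$. It also does not follow from $R_{KL}$. With the paper's definition $R_{KL}(\Delta_M)=\sqrt{\log M}$, a uniform $\mathbf{m}^0$ only gives $\mathrm{KL}(\mathbf{m}^*\|\mathbf{m}^0)\le\log M=R_{KL}(\Delta_M)^2$. The paper's own proof in fact ends with $D_{KL}(\mathbf{m}^*\|\mathbf{m}^0)$ in the max.

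Under your fallback, the base case comes instead from the $i=0$ step: $D_1\le -D_0+2G_\infty^2/\nu^2\le C$. Since $D_1\ge 0$, this would also force $\mathrm{KL}(\mathbf{m}^*\|\mathbf{m}^0)\le 2G_\infty^2/\nu^2$ for every initialization. That is another sign the assumed inequality is too strong.

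Finally, your induction bounds $D_{N+1}$, i.e.\ the iterate $\mathbf{m}^{N+1}$ that defines $Q^N_{KL}$. The displayed bound is stated for $\mathbf{m}^N$.
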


\section{Numerical Experiments}
\label{numerical}

In this section, we present a series of numerical experiments designed to evaluate the performance and practical advantages of our proposed optimization-based density estimation framework. Our primary focus is on Algorithm~\ref{alg smd} (Exp-SMD), which operates in an online setting utilizing stochastic updates and KL-geometry. 

To demonstrate the robustness of Exp-SMD, we evaluate our algorithm using synthetic distributions. We begin with a sparse categorical distribution to establish a simple, intuitive baseline for convergence behavior. We then evaluate continuous distributions to test the algorithm's ability to capture multi-scale features, sharp boundaries, and complex topologies.

Across the continuous experiments, we benchmark our method against classical approaches: kernel density estimation (KDE), $k$-nearest neighbors (k-NN), and stochastic gradient descent (SGD).

\subsubsection{Experimental Setup and Hyperparameters}

Before detailing the results, we define the target distributions, the multi-scale kernel dictionary, and the specific parameters used for our solvers and baselines.

\paragraph{1. Target Continuous Distribution}
For our primary continuous evaluation, the objective is to estimate an unknown two-dimensional probability density function (PDF), $P^*(\zeta)$, using a stochastic stream of i.i.d. samples $\zeta_t \sim P^*$. The target distribution operates over $\zeta_x \in [-5, 5]$ and $\zeta_y \in [-5, 5]$, and is an equal mixture of four distinct topological modes:
$$
\scalebox{0.85}{$\displaystyle P^*(x, y) = \frac{1}{4} P_{\text{Donut}} + \frac{1}{4} P_{\text{Square}} + \frac{1}{4} P_{\text{Diagonal}} + \frac{1}{4} P_{\text{Spike}} $}
$$

\begin{itemize}
    \item \textbf{The Donut:} A non-convex ring centered at the origin, defined by a Gaussian profile along its radius.
    $$
    \scalebox{0.85}{$\displaystyle P_{\text{Donut}}(x, y) \propto \exp \left( - \frac{(\sqrt{x^2 + y^2} - 2.5)^2}{2(0.2)^2} \right) $}
    $$
    
    \item \textbf{The Square:} A uniform distribution over a rectangular region, representing sharp, discontinuous boundaries.
    $$
    \scalebox{0.85}{$\displaystyle P_{\text{Square}}(x, y) \propto \mathbb{I}(-2.75 \le x \le -1.25) \cdot \mathbb{I}(1.25 \le y \le 2.75) $}
    $$
    
    \item \textbf{The Diagonal Gaussian:} A highly correlated Gaussian distribution oriented along the line $y=x$, featuring a cross-term to induce strong correlation.
    \begin{align*}
    &\scalebox{0.85}{$ P_{\text{Diagonal}}(x, y) \propto  $} \\
     &\scalebox{0.85}{$\exp \left( - \left[ 2(x - 2.5)^2 - 3.5(x - 2.5)(y - 2.5) + 2(y - 2.5)^2 \right] \right)$} 
    \end{align*}
    
    \item \textbf{The Spike:} An extremely low-variance Gaussian acting as a high-frequency feature (a Dirac delta approximation).
    $$
    \scalebox{0.85}{$\displaystyle P_{\text{Spike}}(x, y) \propto \exp \left( - \frac{(x + 2)^2 + (y + 2)^2}{2(0.1)^2} \right) $}
    $$
\end{itemize}

\paragraph{2. Multi-Scale Kernel Dictionary}
To capture both broad global structures and fine details, we construct a multi-scale kernel dictionary $\{f_i\}$ consisting of isotropic Gaussian kernels:
\begin{equation}
    \scalebox{0.85}{$\displaystyle f_i(\zeta; \mu_i, \sigma_i) = \frac{1}{2\pi\sigma_i^2} \exp \left( -\frac{\|\zeta - \mu_i\|^2}{2\sigma_i^2} \right) $}
\end{equation}

The dictionary is structured in three layers:
\begin{itemize}
    \item \textbf{Coarse Layer:} An $8 \times 8$ grid with $\sigma = 1.5$ (captures global structure).
    \item \textbf{Medium Layer:} A $15 \times 15$ grid with $\sigma = 0.5$ (captures intermediate-scale geometry).
    \item \textbf{Fine Layer:} A $30 \times 30$ grid with $\sigma = 0.15$ (resolves sharp features like the spike and square boundaries).
\end{itemize}

\paragraph{3. Methods and Baseline Parameters}
\begin{itemize}
    \item \textbf{Proposed Method (SMD):} Uses a step-size schedule $\gamma_t = \gamma_0 / (1+t)^{\text{decay}}$ to allow large initial steps and reduced variance in later iterations. We set $\gamma_0 = 0.1$ and $\text{decay} = 0.35$.
    \item \textbf{Method 2 (SGD):} Operates in Euclidean geometry using a softmax parameterization over unconstrained logits $w \in \mathbb{R}^K$ to enforce simplex constraints. Optimized using standard stochastic gradient descent.
    \item \textbf{Method 3 (Auto-Tuned KDE):} Places a Gaussian kernel on each observed data point. The optimal bandwidth $h$ is determined via grid search with cross-validation over a log-spaced range $h \in [0.01, 10.0]$, maximizing the log-likelihood of held-out data.
    \item \textbf{Method 4 (k-NN):} Density is estimated as $\hat{P}(x) \propto 1 / (\pi (r_k(x)^2 + \delta))$, where $r_k(x)$ is the distance to the $k$-th nearest neighbor and $\delta$ is a small constant for numerical stability.
\end{itemize}

\subsubsection{Results and Discussion}

\paragraph{Experiment 1: Sparse Categorical Distribution}
We first evaluate convergence behavior on a sparse categorical distribution. As illustrated in Figure~\ref{fig:1}, our SMD estimator converges significantly faster than the add-a-constant baseline and achieves a lower KL divergence in the long term. 

\begin{figure}[htbp]
    \centering
    \begin{overpic}[width=1\linewidth]{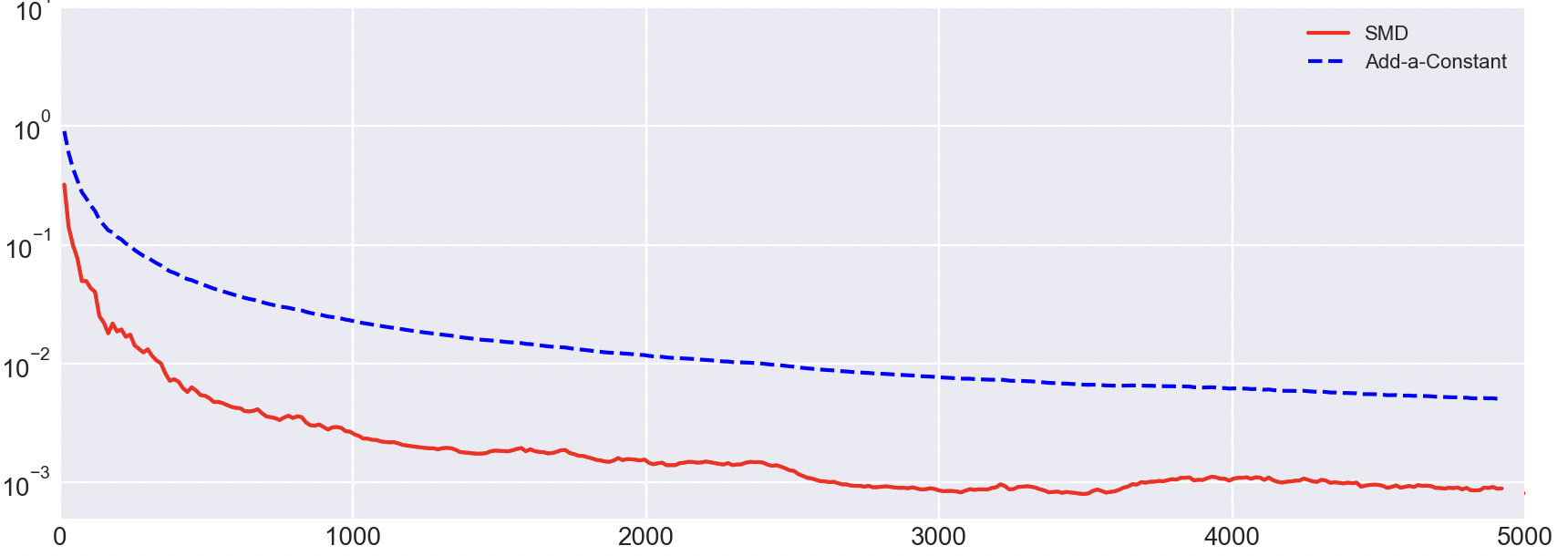}
        \put(50,-3){\small Iteration}
        \put(-3,1){\rotatebox{90}{\smaller Log-scale  KL Divergence}}
    \end{overpic}
    \vspace{1em}
    \caption{Log-scale KL divergence versus iteration for different estimators.}
    \label{fig:1}
\end{figure}

\paragraph{Experiment 2: Four-Mode Continuous Distribution}
Next, we evaluate the four-mode continuous distribution (the donut, square, diagonal, and spike). Figure~\ref{fig:donut} demonstrates the performance of SMD against KDE and k-NN. Our approach accurately captures both the broad, non-convex geometry of the donut and the sharp, localized features of the spike and square, outperforming the baselines in overall estimation accuracy.

\begin{figure}[htbp]
    \centering
    
    \begin{subfigure}[b]{0.15\textwidth}
        \centering
        \includegraphics[width=\textwidth]{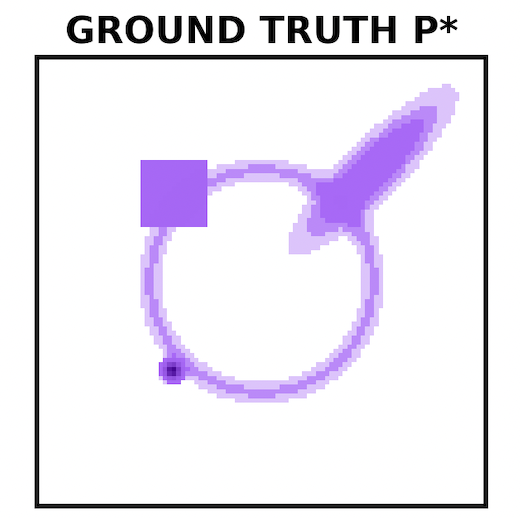}
        \label{fig:sub1}
    \end{subfigure}
    \hfill
    \begin{subfigure}[b]{0.15\textwidth}
        \centering
        \includegraphics[width=\textwidth]{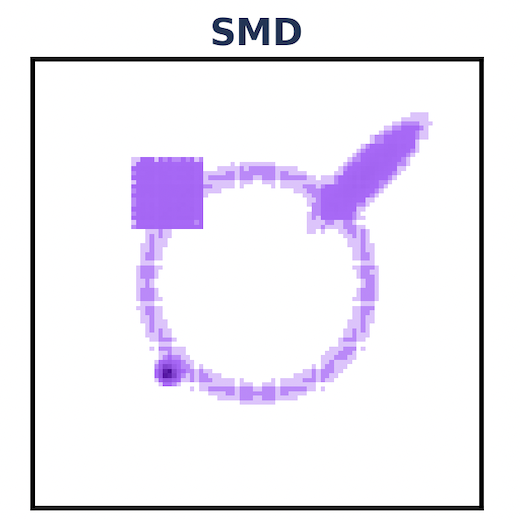}
        \label{fig:sub2}
    \end{subfigure}
    \hfill
    \begin{subfigure}[b]{0.15\textwidth}
        \centering
        \includegraphics[width=\textwidth]{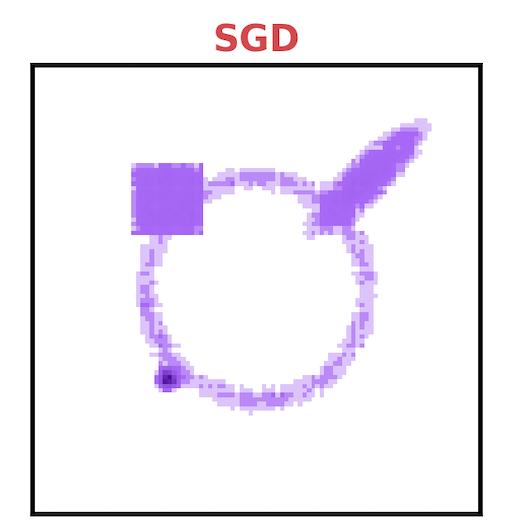}
        \label{fig:sub3}
    \end{subfigure}
    
    \vspace{1em}
    
    \begin{subfigure}[b]{0.15\textwidth}
        \centering
        \includegraphics[width=\textwidth]{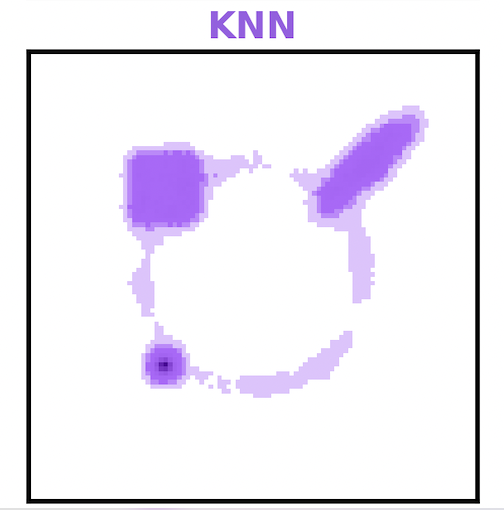}
        \label{fig:sub4}
    \end{subfigure}
    \hfill
    \begin{subfigure}[b]{0.16\textwidth}
        \centering
        \includegraphics[width=\textwidth]{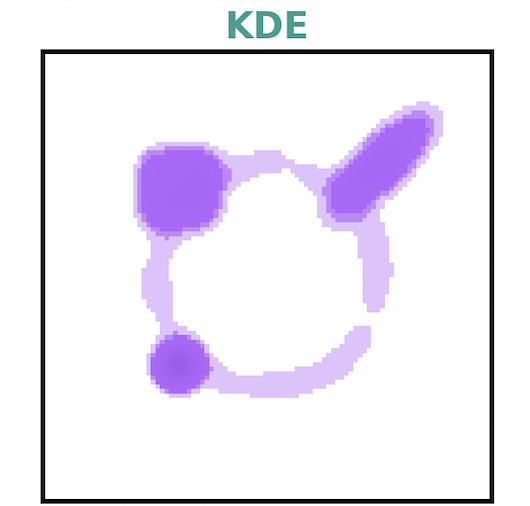}
        \label{fig:sub_table}
    \end{subfigure}
    \hfill
    \begin{subfigure}[b]{0.15\textwidth}
        \centering
        \includegraphics[width=\textwidth]{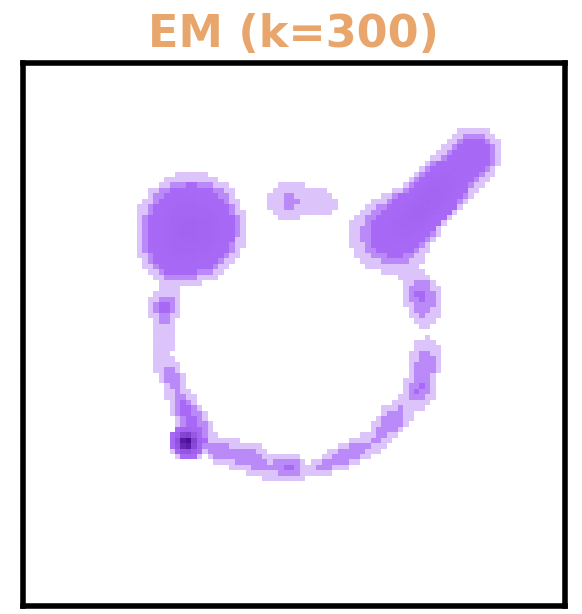}
        \label{fig:subem300}
    \end{subfigure}
    
    \vspace{1em}
    
    \begin{subfigure}[b]{0.3\textwidth}
        \centering
        \includegraphics[width=\textwidth]{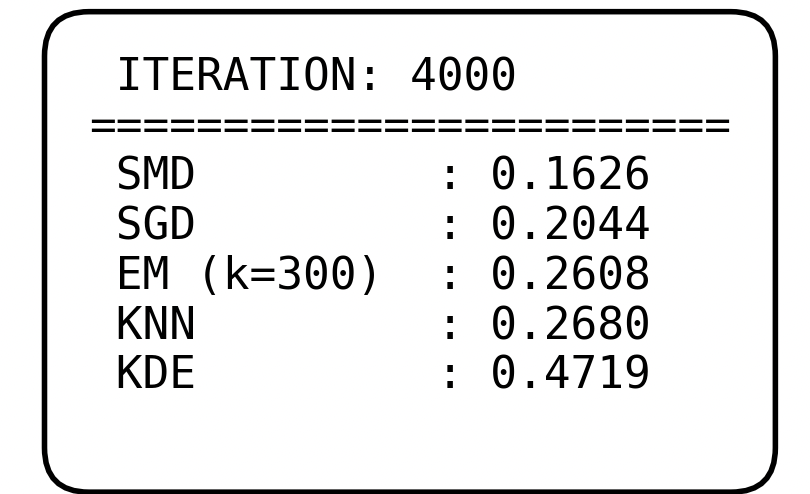}
        \label{fig:sub8}
    \end{subfigure}
    
    \vspace{1em}
    
    \begin{subfigure}[b]{0.5\textwidth}
        \centering
        \begin{overpic}[width=1\linewidth]{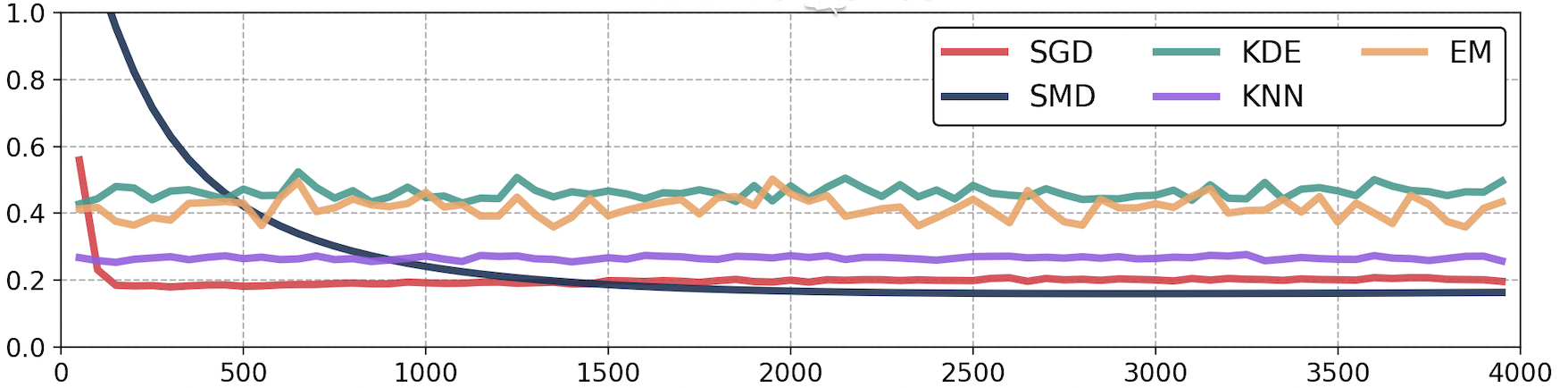}
        \put(45,-3){\smaller Iteration}
        \put(-3,3){\rotatebox{90}{\smaller KL Divergence}}
    \end{overpic}
        \label{fig:sub9}
    \end{subfigure}
    
    \caption{Performance comparison of SMD, EM, KDE, and k-NN on continuous distributions including a donut, Gaussian mixture, and uniform distribution.}
    \label{fig:donut}
\end{figure}

\paragraph{Experiment 3: Multi-Scale Gaussian Mixture}
To further test the resolution of competing methods, we consider a distribution consisting of one wide Gaussian superimposed with four sharp spikes. As shown in Figure~\ref{fig:mix_bandwidth}, our method significantly outperforms the other approaches in this mixed-scale setting.

\begin{figure}[htbp]
    \centering
    
    \begin{subfigure}[b]{0.25\textwidth}
        \centering
        \includegraphics[width=\textwidth]{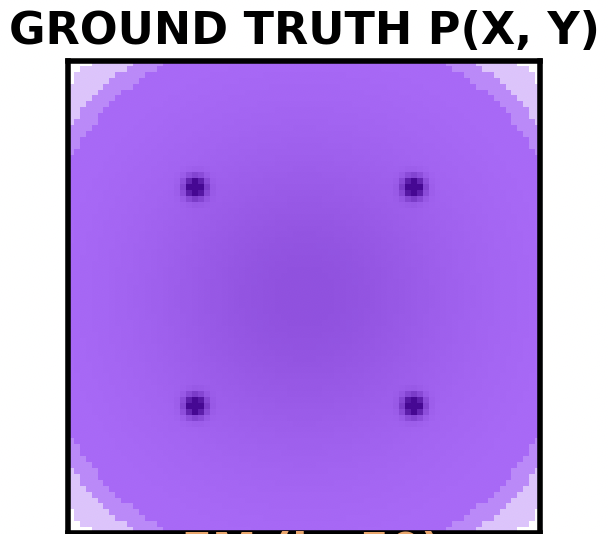}
    \end{subfigure}
    \hfill
    \begin{subfigure}[b]{0.21\textwidth}
        \centering
        \includegraphics[width=\textwidth]{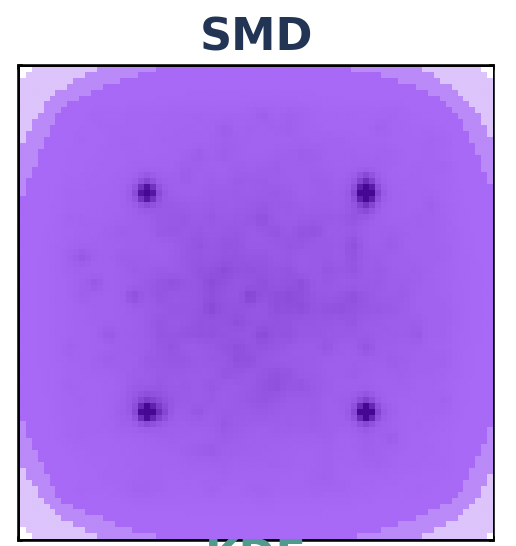}
    \end{subfigure}
    \hfill
    \begin{subfigure}[b]{0.15\textwidth}
        \centering
        \includegraphics[width=\textwidth]{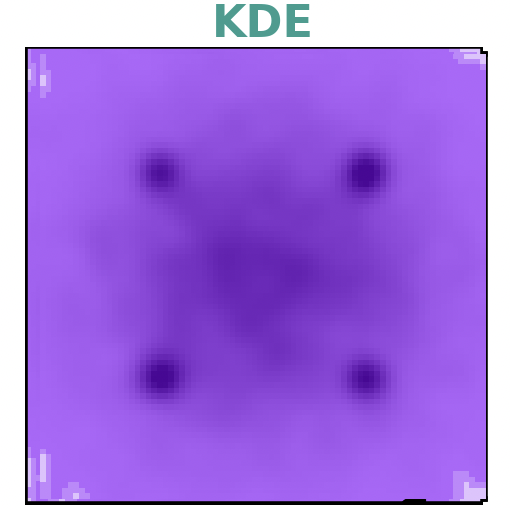}
    \end{subfigure}
    \hfill
    \begin{subfigure}[b]{0.16\textwidth}
        \centering
        \includegraphics[width=\textwidth]{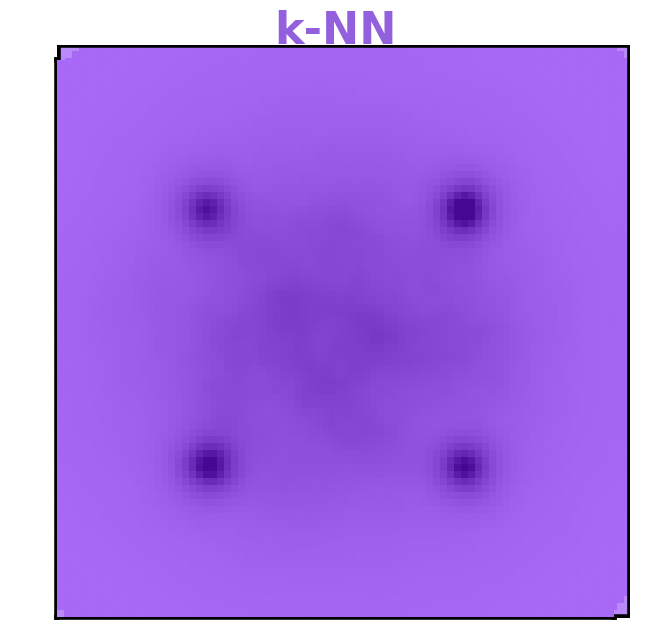}
    \end{subfigure}
    \hfill
    \begin{subfigure}[b]{0.16\textwidth}
        \centering
        \includegraphics[width=\textwidth]{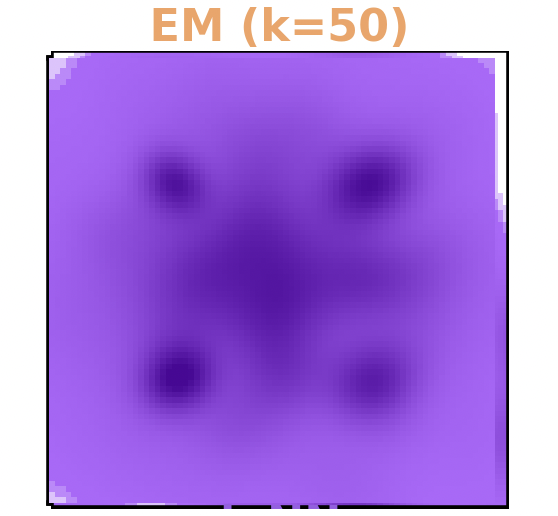}
    \end{subfigure}
    
    \vspace{1em}
    
    \begin{subfigure}[b]{0.4\textwidth}
        \centering
        \begin{overpic}[width=1\linewidth]{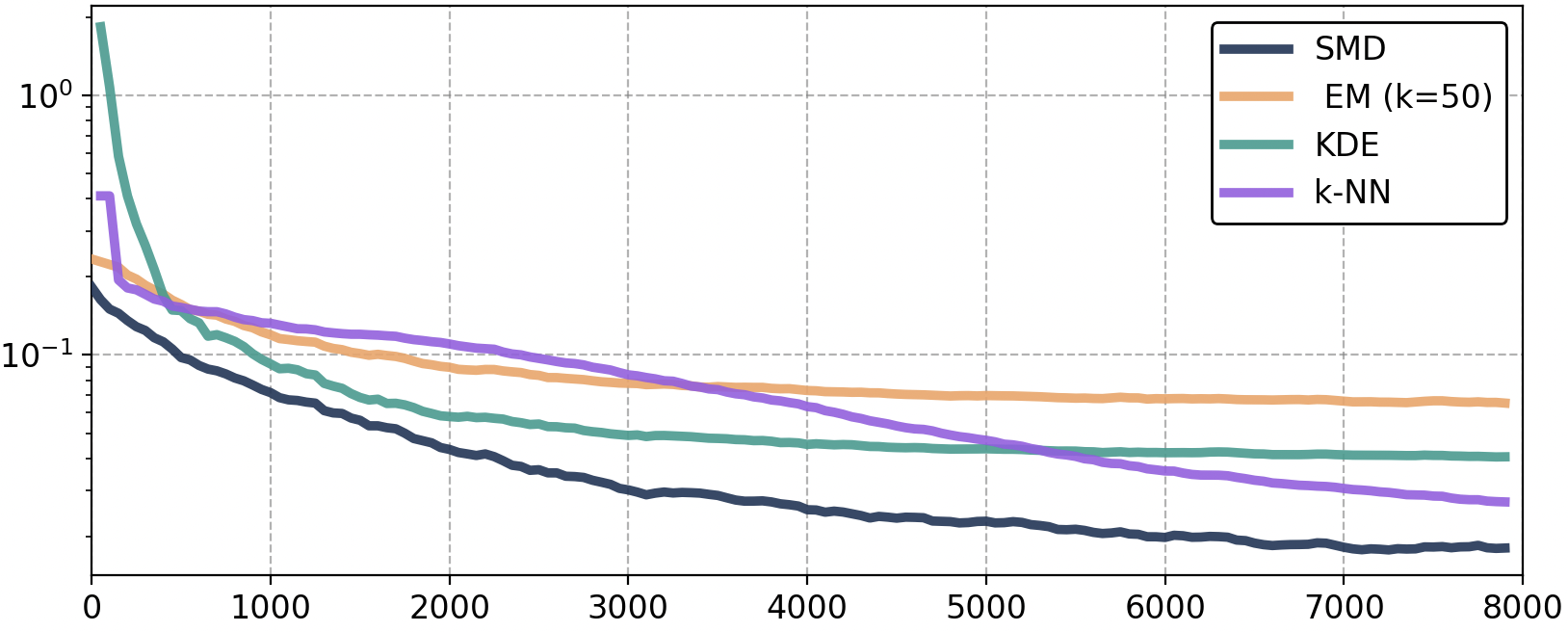}
        \put(45,-3){\smaller Iteration}
        \put(-5,1){\rotatebox{90}{\smaller KL Divergence (log scale)}}
    \end{overpic}
        \label{fig:sub5}
    \end{subfigure}
    \caption{Comparison of SMD, EM, KDE, and k-NN on a mixture of continuous distributions.}
    \label{fig:mix_bandwidth}
\end{figure}

\paragraph{Ablation Study}
To evaluate the computational scalability and algorithmic efficiency of our approach, we conducted an ablation study with a primary focus on processing time and convergence behavior. When executing the framework on a standard CPU, we observed that processing times are highly sensitive to the chosen kernel size. As the kernel size increases, the computational overhead required to evaluate the spatial interactions across the synthetic distributions scales accordingly. Larger kernels necessitate a significantly higher volume of distance calculations and exponential evaluations per iteration, leading to a steep increase in CPU consumption and a corresponding expansion of the overall runtime as can be seen in figure \ref{fig:cpu}.

\begin{figure}[htbp]
    \centering
    
    \begin{subfigure}[b]{0.4\textwidth}
        \centering
        \includegraphics[width=\textwidth]{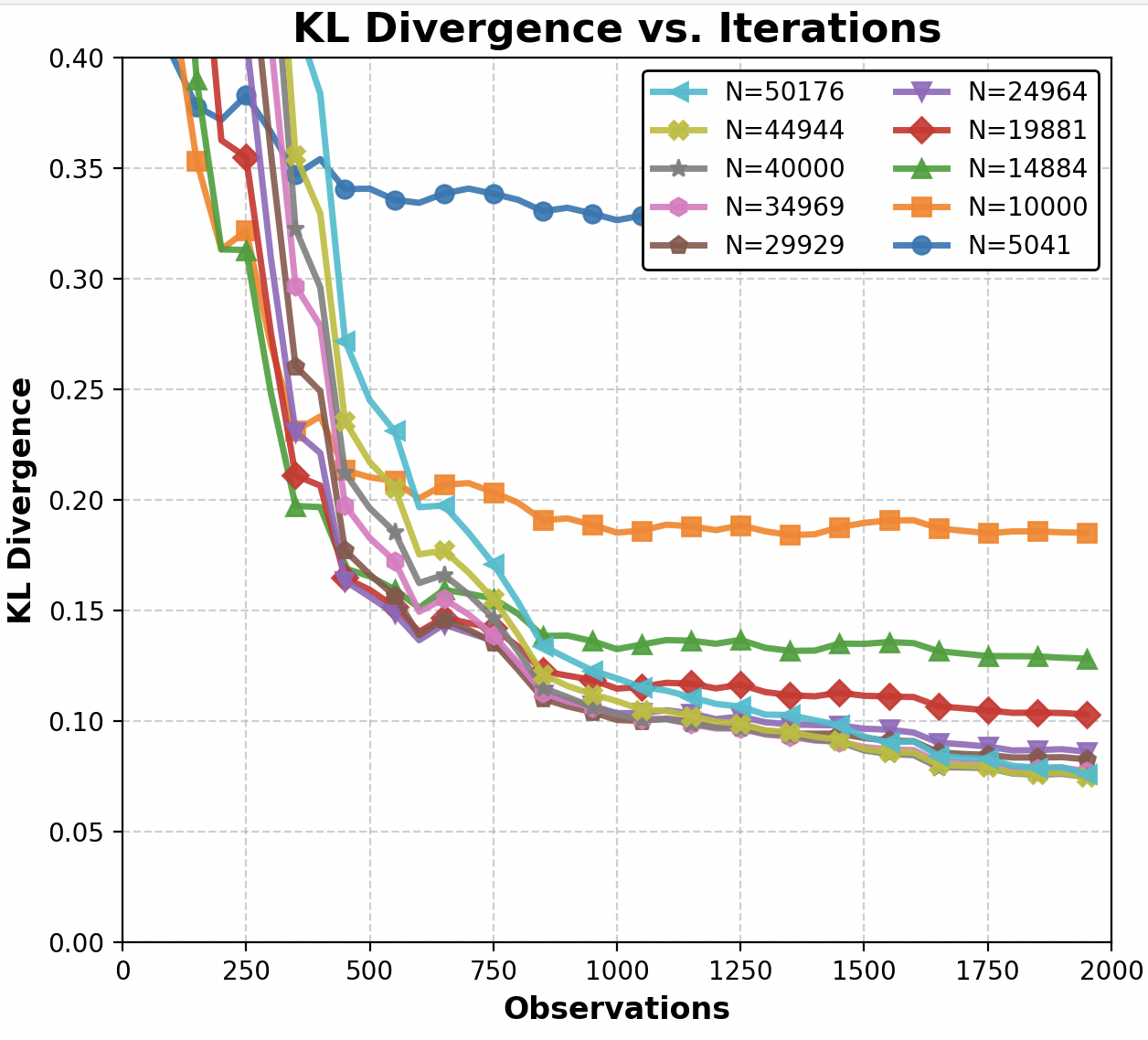}
    \end{subfigure}
    \hfill
    
    \begin{subfigure}[b]{0.4\textwidth}
        \centering
        \includegraphics[width=\textwidth]{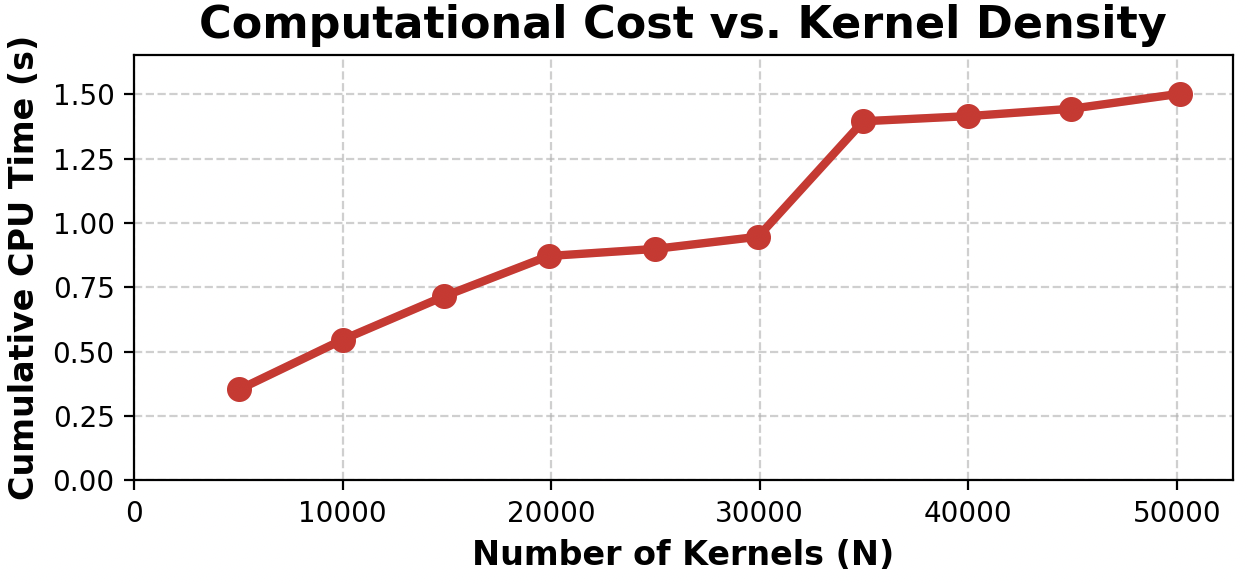}
    \end{subfigure}
    \hfill
    \begin{subfigure}[b]{0.5\textwidth}
        \centering
        \includegraphics[width=\textwidth]{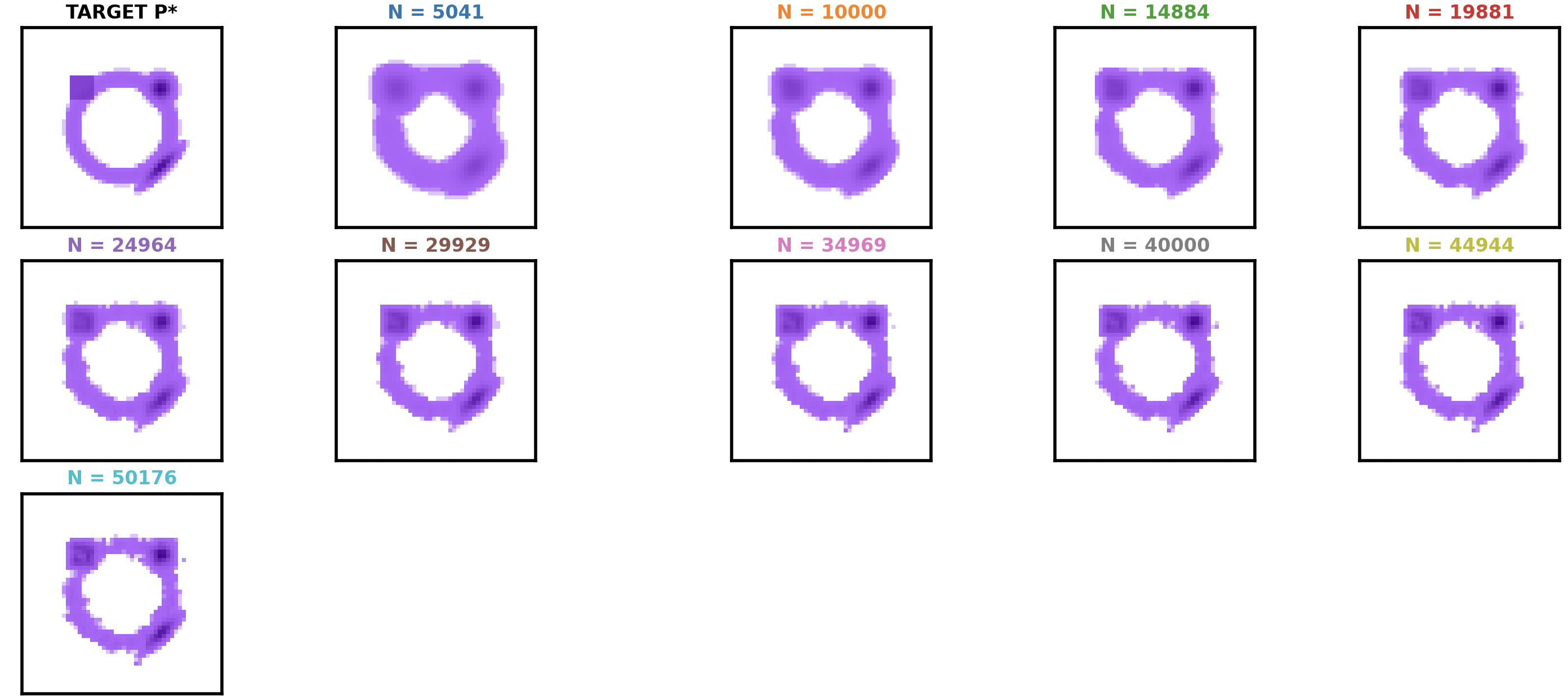}
    \end{subfigure}
    
    \vspace{1em}
    \caption{Comparison of CPU time consumption for different kernel sizes.}
    \label{fig:cpu}
\end{figure}

\begin{figure}[htbp]
    \centering
  
\begin{subfigure}[b]{0.4\textwidth}

        \includegraphics[width=\textwidth]{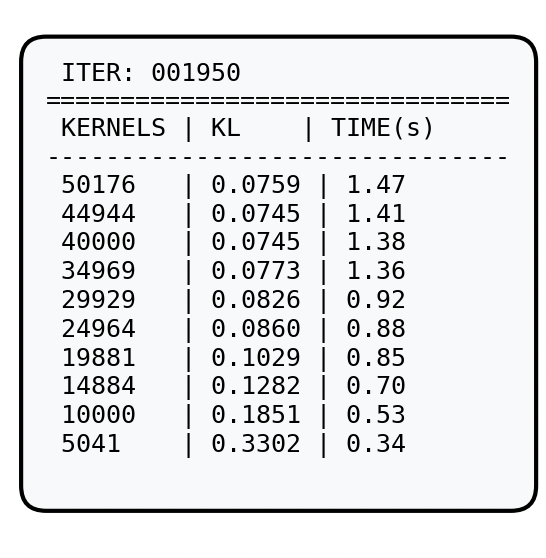}
    \end{subfigure}

\begin{subfigure}[b]{0.395\textwidth}

        \includegraphics[width=\textwidth]{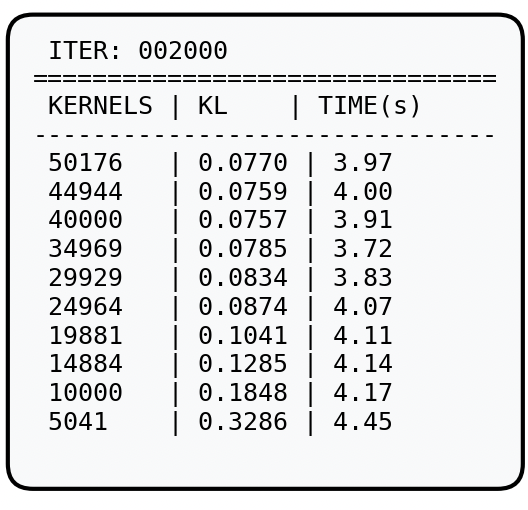}
    \end{subfigure}
\vspace{1em}
    \caption{Comparison of GPU time consumption (top) for different kernel sizes with CPU time consumption (bottom).}
    \label{fig:cpu vs gpu}
\end{figure}

To address this computational demand, we transitioned the evaluation to a GPU-accelerated environment, which yielded substantial improvements across all configurations, as can be seen in \ref{fig:gpu}. The core operations of the optimization steps—particularly the dense matrix multiplications and element-wise kernel evaluations—are highly parallelizable and map exceptionally well to GPU architectures. By offloading these calculations, the GPU efficiently manages the computational complexity associated with larger kernels, resulting in drastically lower latency per iteration and allowing the framework to execute at a much higher performance level.

\begin{figure}[htbp]
    \centering
    
    \begin{subfigure}[b]{0.4\textwidth}
        \centering
        \includegraphics[width=\textwidth]{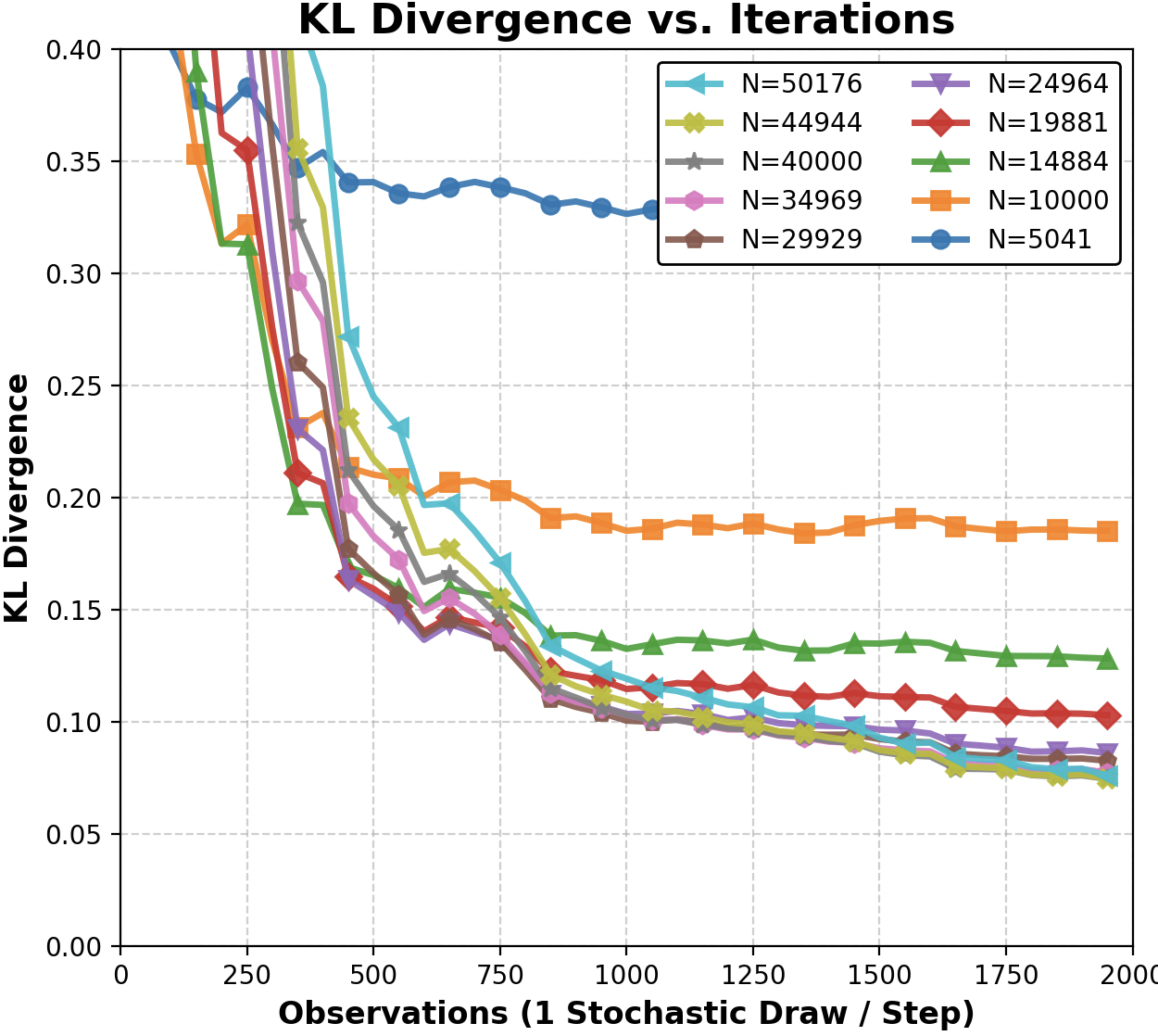}
    \end{subfigure}
    \hfill
    \begin{subfigure}[b]{0.4\textwidth}
        \centering
        \includegraphics[width=\textwidth]{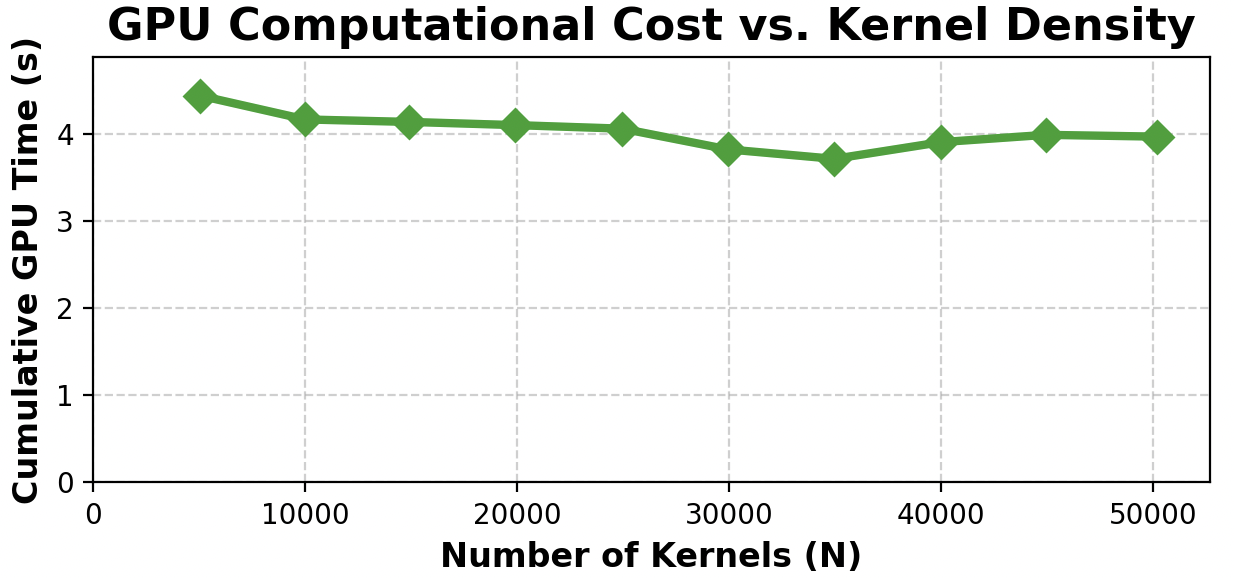}
    \end{subfigure}
    \hfill
    \begin{subfigure}[b]{0.5\textwidth}
        \centering
        \includegraphics[width=\textwidth]{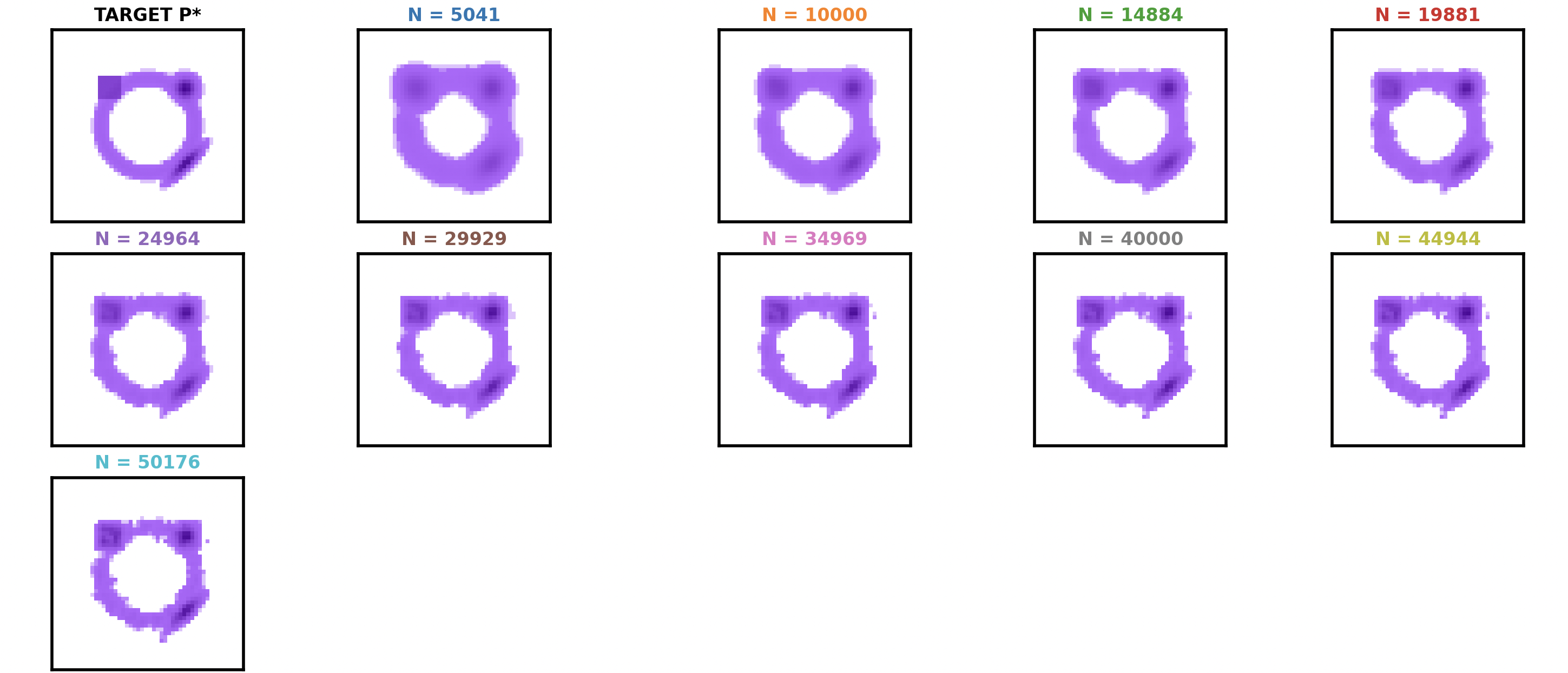}
    \end{subfigure}
    
    \vspace{1em}
    \caption{Comparison of GPU time consumption for different kernel sizes.}
    \label{fig:gpu}
\end{figure}

Finally, the ablation study compared the optimization trajectories of Stochastic Mirror Descent (SMD) and Method 2 (SGD). The empirical results demonstrated that the iteration times and fundamental update mechanics between the two approaches are remarkably similar figure \ref{fig:smdvssgd}. However, while they share this underlying similarity, SMD ultimately proves to be the better method for this problem space; because SMD respects the underlying geometry of the probability distributions during the update steps, it navigates the optimization landscape more effectively and achieves superior results compared to the straightforward gradient updates of SGD.

\begin{figure}[htbp]
    \centering
    
    \begin{subfigure}[b]{0.5\textwidth}
        \centering
        \includegraphics[width=\textwidth]{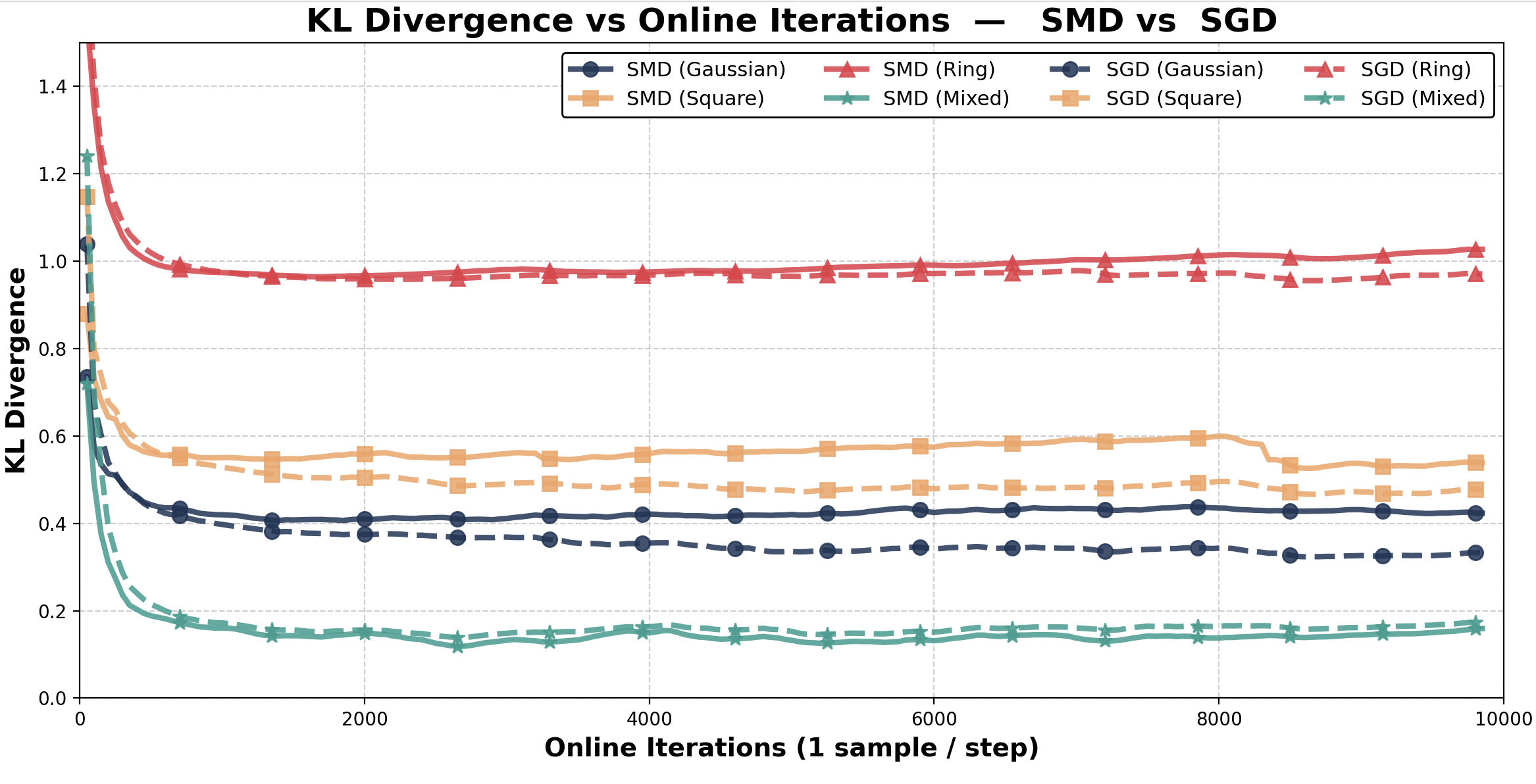}
    \end{subfigure}
    \hfill
    \begin{subfigure}[b]{0.3\textwidth}
        \centering
        \includegraphics[width=\textwidth]{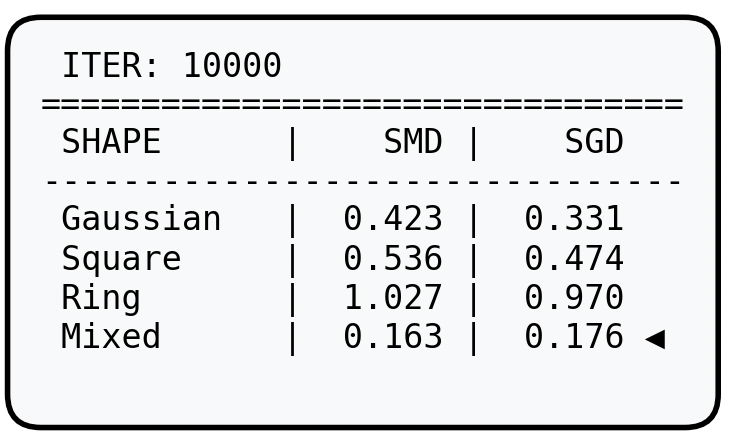}
    \end{subfigure}
    \hfill
    \begin{subfigure}[b]{0.5\textwidth}
        \centering
        \includegraphics[width=\textwidth]{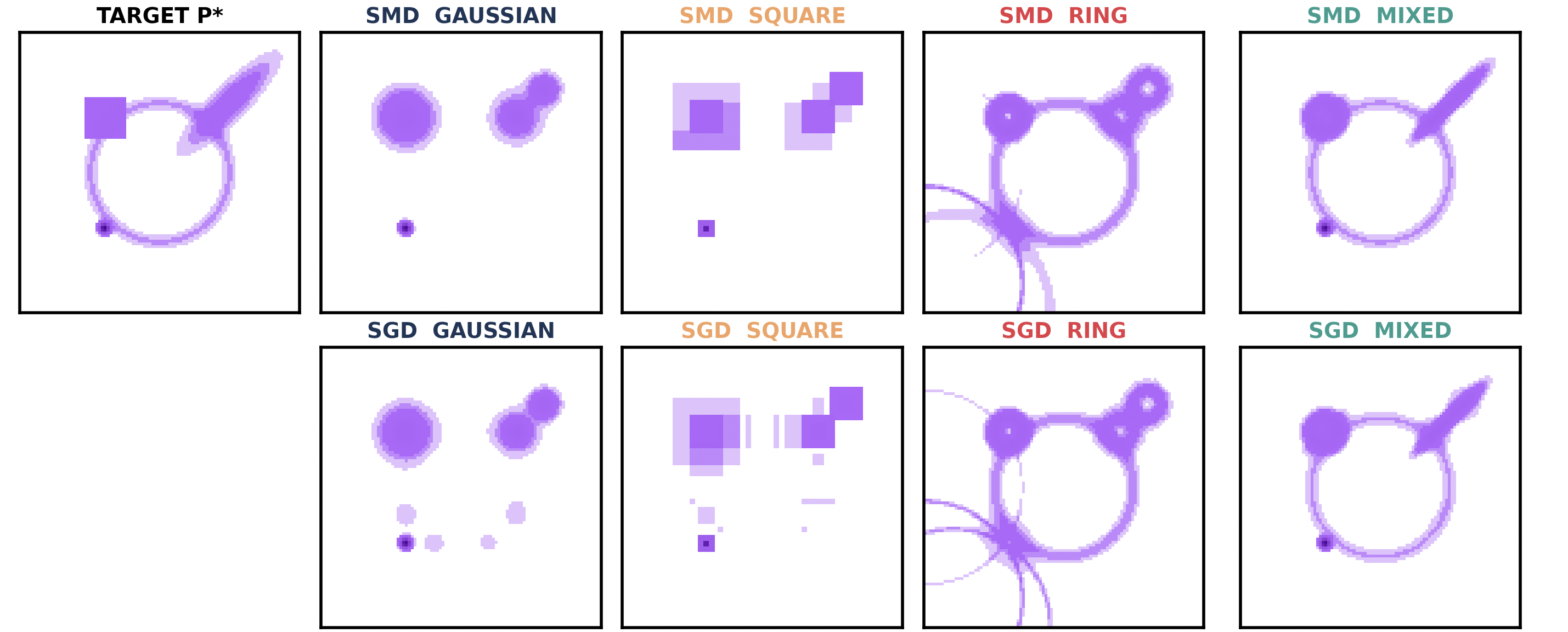}
    \end{subfigure}
    \hfill

    \vspace{1em}
    \caption{Comparison of SMD and SGD over different kernel shapes and sizes.}
    \label{fig:smdvssgd}
\end{figure}

\paragraph{Key Advantages}
Based on these experiments, two primary advantages of our Exp-SMD framework emerge:
\begin{enumerate}
    \item \textbf{Adaptive Resolution:} Unlike KDE, which relies on a single global bandwidth and struggles to balance narrow and broad structures simultaneously (Fig.~\ref{fig:mix_bandwidth}), our multi-scale kernel approach adapts automatically to diverse topological features.
    \item \textbf{Efficient Pruning:} In contrast to alternative fixed-component models that struggle on mixed-scale or non-Gaussian structures, our method automatically assigns negligible weights to irrelevant kernels, effectively pruning unnecessary components from the dictionary without the need for manual tuning.
\end{enumerate}

\section{Conclusion and Future Work}
\label{sec:conclusion}
In this work, we proposed a new framework for distribution estimation via stochastic mirror descent (SMD), offering a flexible and theoretically grounded alternative to traditional approaches such as empirical or Bayesian estimators. By casting the estimation task as a strongly convex optimization problem, we derived convergence bounds for several choices of distance-generating functions, including KL and Euclidean divergences. Our results demonstrate that the resulting $\varphi$-SMD estimators not only achieve desirable convergence rates but also scale more favorably with dimensionality compared to existing minimax estimators or the commonly used add-a-constant estimator.

Future research directions include the exploration of \textit{adaptive step-size schemes} to further enhance convergence rates without compromising stability. Another promising avenue is extending the framework to \textit{online or streaming data settings}, where observations arrive sequentially and estimators must update in real-time. Additionally, incorporating \textit{structured priors} or \textit{regularization techniques} into the SMD updates could better integrate prior knowledge or enforce sparsity, particularly in overcomplete or underdetermined mixture models. Finally, evaluating the performance of our methods in practical applications such as clustering, density estimation, or language modeling would help bridge the gap between theoretical guarantees and real-world utility.

\section{Appendix}

We begin with proof of the lemma \ref{lemma SC}

\begin{proof}
    We establish convexity by directly computing the Hessian at any point \(\mathbf{m}=[m_1, \dots, m_M]\):
\begin{align*}
\lefteqn{H(m)=\nabla^2 \mathbb{E}_{P^*} \left[ \log \frac{1}{\sum_{i=1}^M m_i f_i(\zeta)} \right]}  \\[1em]
&= \mathbb{E}_{P^*} \left(
\scalebox{0.85}{$
\begin{bmatrix}
\frac{(f_1(\zeta))^2}{(\sum_{i=1}^M m_i f_i(\zeta))^2} & \dots & \frac{f_1(\zeta)f_M(\zeta)}{(\sum_{i=1}^M m_i f_i(\zeta))^2} \\
\vdots & \ddots & \vdots \\
\frac{f_M(\zeta)f_1(\zeta)}{(\sum_{i=1}^M m_i f_i(\zeta))^2} & \dots & \frac{(f_M(\zeta))^2}{(\sum_{i=1}^M m_i f_i(\zeta))^2}
\end{bmatrix}
$}
\right) \\[1em]
&= \mathbb{E}_{P^*} \left(
\scalebox{0.85}{$
\begin{bmatrix}
\frac{f_1(\zeta)}{\sum_{i=1}^M m_i f_i(\zeta)} \\
\vdots \\
\frac{f_M(\zeta)}{\sum_{i=1}^M m_i f_i(\zeta)}
\end{bmatrix}
\begin{bmatrix}
\frac{f_1(\zeta)}{\sum_{i=1}^M m_i f_i(\zeta)}, \dots, \frac{f_M(\zeta)}{\sum_{i=1}^M m_i f_i(\zeta)}
\end{bmatrix}
$} \right).
\end{align*}
This shows that the Hessian is positive definite, which implies that the objective function is convex.

To establish strong convexity, we consider the the minimum eigenvalue of the Hessian:
\begin{align*}
& \lambda_{\min}(H(\mathbf{m}))=\\
& \min_{\|v\|^2 = 1} \,
\mathbb{E}_{P^*} \Bigg[
v^\top
\scalebox{0.7}{$
\begin{bmatrix}
\frac{f_1(\zeta)}{\sum_{i=1}^M m_i f_i(\zeta)} \\
\vdots \\
\frac{f_M(\zeta)}{\sum_{i=1}^M m_i f_i(\zeta)}
\end{bmatrix}
\begin{bmatrix}
\frac{f_1(\zeta)}{\sum_{i=1}^M m_i f_i(\zeta)}, \dots, \frac{f_M(\zeta)}{\sum_{i=1}^M m_i f_i(\zeta)}
\end{bmatrix}
$}
v
\Bigg] \\
&= \min_{\|v\|^2 = 1} \,
\mathbb{E}_{P^*} \left[
\frac{\left( \sum_{i=1}^M v_i f_i(\zeta) \right)^2}
{\left( \sum_{i=1}^M m_i f_i(\zeta) \right)^2}
\right].
\end{align*}

By Assumption~\ref{assumption 2}, we can bound the denominator:
\begin{align}
&\lambda_{\min}(H(\mathbf{m})) \ge \nonumber
\\&
\scalebox{0.85}{$
\frac{1}{\left( \max_{i \in [1,\dots,M]} \sup_{\zeta \in \Omega} f_i(\zeta) \right)^2}
\min_{\|v\|^2 = 1}
\mathbb{E}_{P^*} \left[
\left( \sum_{i=1}^M v_i f_i(\zeta) \right)^2
\right]
$}
\label{eq positive}
\end{align}

From Assumption \ref{assumption 1} we know, \[\forall ~v, \|v\|^2=1~\mathbb{E}_{P^*} \left[
\left( \sum_{i=1}^M v_i, f_i(\zeta) \right)^2
\right]>0\]. 

If not, the second moment \( \sum_{i=1}^M v_i f_i(\zeta)\) is zero , which implies that \(\sum_{i=1}^M v_i f_i(\zeta)\) is zero almost surely, this means that \(\exists A ~ f_i\big|_A(\cdot)\) are linearly dependent which contradicts Assumption~\ref{assumption 1}:
\[\scalebox{.9}{$
\exists \mathcal{A} \in \mathcal{F} \ \text{such that} \ 
\int_{\mathcal{A}}
\left( \sum_{i=1}^M v_i f_i(\zeta) \right)^2
dP^*(\zeta) > 0
\quad \forall v \in \mathbb{R}^M.$}
\]

In other words, we have shown that \(\lambda_{\min}(H(m))\) is greater than a non-zero value independent of \(m\). Let us define \(\nu\):
\[
\nu := \lambda_{\min}(\nabla^2 \mathbb{E}_{P^*} \left[ \log \frac{1}{\sum_{i=1}^M m_i f_i(\zeta)} \right]) > 0.
\]
In other words, our objective function is \(\nu\)- strong convexity.

\end{proof}

We now provide the proof of Theorem~\ref{thm1}.

\begin{proof}
\textbf{Setup.} We minimize 
\[ F(m) = \mathbb{E}_{\zeta \sim P^*}\bigl[-\log \langle m, f(\zeta)\rangle\bigr] \]
over the simplex $\Delta_M$. Let $m^* = \arg\min_{m \in \Delta_M} F(m)$ and assume
$F$ is $\nu$-strongly convex on $\Delta_M$:
\[\quad \forall m, m' \in \Delta_M,
\]
\[
F(m') \geq F(m) + (m' - m)^\top \nabla F(m) + \tfrac{1}{2} \nu \|m' - m\|_2^2,
\]
equivalently
\begin{equation}\label{eq:sc-equiv}
(m' - m)^\top \bigl(\nabla F(m') - \nabla F(m)\bigr) \geq \nu \|m' - m\|_2^2.
\end{equation}
Let $\zeta_1, \zeta_2, \dots \stackrel{\text{i.i.d.}}{\sim} P^*$ and let
$G(m, \zeta) = -\nabla_m \log \langle m, f(\zeta)\rangle$ be the stochastic gradient,
so $\mathbb{E}_\zeta[G(m,\zeta)] = \nabla F(m)$. Assume the bound
\begin{equation}\label{eq:Gbound}
\mathbb{E}\bigl[\|G(m, \zeta)\|_2^2\bigr] \leq G_\infty^2 \quad \forall m \in \Delta_M.
\end{equation}
Consider the projected SGD iteration
\[
m^{t+1} = \Pi_{\Delta_M}\bigl(m^t - \gamma_t G(m^t, \zeta_t)\bigr),
\]
and define
\[
A_t = \tfrac{1}{2}\|m^t - m^*\|_2^2, \qquad a_t = \mathbb{E}[A_t].
\]

\smallskip
\textbf{Step 1: One-step inequality.}
Since $\Pi_{\Delta_M}$ is non-expansive and $\Pi_{\Delta_M}(m^*) = m^*$,
\begin{align}
A_{t+1}
&= \tfrac{1}{2}\bigl\|\Pi_{\Delta_M}(m^t - \gamma_t G(m^t,\zeta_t)) - \Pi_{\Delta_M}(m^*)\bigr\|_2^2 \nonumber \\
&\leq \tfrac{1}{2}\bigl\|m^t - \gamma_t G(m^t,\zeta_t) - m^*\bigr\|_2^2 \nonumber \\
&= A_t + \tfrac{1}{2} \gamma_t^2 \|G(m^t,\zeta_t)\|_2^2 
   - \gamma_t (m^t - m^*)^\top G(m^t,\zeta_t). \label{eq:onestep}
\end{align}

\smallskip
\textbf{Step 2: Conditioning.}
Let $\mathcal{F}_{t-1} = \sigma(\zeta_1, \dots, \zeta_{t-1})$. Since $m^t$ is
$\mathcal{F}_{t-1}$-measurable and $\zeta_t$ is independent of $\mathcal{F}_{t-1}$,
\begin{align}
&\mathbb{E}\bigl[(m^t - m^*)^\top G(m^t, \zeta_t)\bigr] \nonumber \\
&= \mathbb{E}\bigl[(m^t - m^*)^\top \mathbb{E}[G(m^t,\zeta_t)\mid \mathcal{F}_{t-1}]\bigr] \nonumber \\
&= \mathbb{E}\bigl[(m^t - m^*)^\top \nabla F(m^t)\bigr]. \label{eq:cond}
\end{align}
Taking expectations in \eqref{eq:onestep} and using \eqref{eq:Gbound} and \eqref{eq:cond},
\begin{equation}\label{eq:rec0}
a_{t+1} \leq a_t - \gamma_t\, \mathbb{E}\bigl[(m^t - m^*)^\top \nabla F(m^t)\bigr] 
+ \tfrac{1}{2}\gamma_t^2 G_\infty^2.
\end{equation}

\smallskip
\textbf{Step 3: Strong convexity at $m^*$.}
From \eqref{eq:sc-equiv} with $m' = m^t$, $m = m^*$,
\[
(m^t - m^*)^\top \bigl(\nabla F(m^t) - \nabla F(m^*)\bigr) \geq \nu\|m^t - m^*\|_2^2.
\]
By first-order optimality of $m^*$ on $\Delta_M$, $(m - m^*)^\top \nabla F(m^*) \geq 0$
for all $m \in \Delta_M$; in particular $(m^t - m^*)^\top \nabla F(m^*) \geq 0$.
Adding,
\begin{equation}\label{eq:scbound}
(m^t - m^*)^\top \nabla F(m^t) \geq \nu\|m^t - m^*\|_2^2 = 2\nu\, A_t.
\end{equation}
Substituting \eqref{eq:scbound} into \eqref{eq:rec0},
\begin{equation}\label{eq:rec}
a_{t+1} \leq (1 - 2\nu\gamma_t)\, a_t + \tfrac{1}{2}\gamma_t^2 G_\infty^2.
\end{equation}

\textbf{Step 4: Solving the recursion.}
Pick the step size $\gamma_t = \dfrac{2}{\nu(t+1)}$. Substituting into \eqref{eq:rec},
\[
a_{t+1} \;\leq\; \frac{t-3}{t+1}\, a_t \;+\; \frac{2G_\infty^2}{\nu^2(t+1)^2}.
\]
Let
\[
B := \max\!\left\{\, a_0,\ \frac{4G_\infty^2}{\nu^2} \,\right\}.
\]
We claim $a_t \leq \dfrac{B}{t+1}$ for all $t \geq 0$, by induction.

\emph{Base case ($t=0$):} $a_0 \leq B$ by definition of $B$.

\emph{Inductive step:} suppose $a_t \leq B/(t+1)$. The coefficient $(t-3)/(t+1)$ is negative for $t \leq 2$ and non-negative for $t \geq 3$, so we split cases.

\emph{Case $t \in \{0,1,2\}$.} Since $a_t \geq 0$ and the coefficient is non-positive, $\frac{t-3}{t+1}\, a_t \leq 0$, hence
\[
a_{t+1} \;\leq\; \frac{2G_\infty^2}{\nu^2(t+1)^2} \;\leq\; \frac{B}{2(t+1)^2} \;\leq\; \frac{B}{t+2},
\]
where the second inequality uses $4G_\infty^2/\nu^2 \leq B$, and the third uses $t+2 \leq 2(t+1)^2$ for $t \geq 0$.

\emph{Case $t \geq 3$.} By the inductive hypothesis $a_t \leq B/(t+1)$,
\begin{align*}
a_{t+1}
&\leq \frac{(t-3)\,B}{(t+1)^2} + \frac{2G_\infty^2}{\nu^2(t+1)^2}
   \;\leq\; \frac{(t-3)\,B + B/2}{(t+1)^2} \\
&\leq \frac{(t-2)\,B}{(t+1)^2}
   \;\leq\; \frac{B}{t+2},
\end{align*}
using $2G_\infty^2/\nu^2 \leq B/2$ and $(t-2)(t+2) \leq (t+1)^2$.

\smallskip
\textbf{Conclusion.} Setting $t = N$,
\[
\mathbb{E}\bigl[\|m^N - m^*\|_2^2\bigr] \;=\; 2\,a_N \;\leq\; \frac{2B}{N+1}
\;=\; \]\[\frac{1}{N+1}\,
\max\!\left\{\, \|m^0 - m^*\|_2^2,\ \frac{8\,G_\infty^2}{\nu^2} \,\right\}.
\]

\end{proof}

We provide the proof of Theorem~\ref{thm2} here.

\begin{proof}
\textbf{Setup.} We minimize $F(m) = \mathbb{E}_{\zeta \sim P^*}[-\log \langle m, f(\zeta) \rangle]$ over the  simplex

\medskip
\textbf{Step 1: Optimality of the prox step.} The first-order optimality condition for (\ref{eq:updaterule}) gives, for any $m^* \in \Delta_M$,
\begin{align} \label{eq:opt}
\bigl\langle \gamma_t G(m^t; \zeta_t) + \nabla \Phi(m^{t+1}) - \nabla \Phi(m^t),\ m^{t+1} - m^* \bigr\rangle \leq 0.
\end{align}

\textbf{Step 2: Bregman three-point identity.} For any $u \in \Delta_M$,
\begin{align} \label{eq:3pt}
&\bigl\langle \nabla \Phi(m^{t+1}) - \nabla \Phi(m^t),\ u - m^{t+1} \bigr\rangle 
= \nonumber \\ &D_{KL}(u \| m^t) - D_{KL}(u \| m^{t+1}) - D_{KL}(m^{t+1} \| m^t).
\end{align}
Combining \eqref{eq:opt} with $u = m^*$ and \eqref{eq:3pt}:
\begin{align} \label{eq:star}
&\gamma_t \langle G(m^t; \zeta_t),\ m^{t+1} - m^* \rangle 
\leq  \nonumber \\ &D_{KL}(m^* \| m^t) - D_{KL}(m^* \| m^{t+1}) - D_{KL}(m^{t+1} \| m^t).
\end{align}

\textbf{Step 3: Splitting and Young's inequality.} Decompose
\begin{align*}
&\langle G(m^t; \zeta_t),\ m^{t+1} - m^* \rangle = \nonumber \\ &\langle G(m^t; \zeta_t),\ m^t - m^* \rangle - \langle G(m^t; \zeta_t),\ m^t - m^{t+1} \rangle. 
\end{align*}
By H\"older's and Young's inequalities,
\begin{align}
&\gamma_t \langle G(m^t; \zeta_t),\ m^t - m^{t+1} \rangle \nonumber \\
&\leq \gamma_t \|G(m^t; \zeta_t)\|_\infty \|m^t - m^{t+1}\|_1 \nonumber \\
&\leq \frac{\gamma_t^2 \|G(m^t; \zeta_t)\|_\infty^2}{2} + \frac{1}{2} \|m^t - m^{t+1}\|_1^2.
\end{align}
From assumption \ref{assumption 2} and by Pinsker's inequality, $\frac{1}{2} \|m^t - m^{t+1}\|_1^2 \leq D_{KL}(m^{t+1} \| m^t)$, so
\begin{align} \label{eq:starstar}
\gamma_t \langle G(m^t; \zeta_t),\ m^t - m^{t+1} \rangle 
\leq \frac{\gamma_t^2 G_\infty^2}{2} + D_{KL}(m^{t+1} \| m^t).
\end{align}
Substituting \eqref{eq:starstar} into \eqref{eq:star}, the $D_{KL}(m^{t+1} \| m^t)$ terms cancel:
\begin{align} \label{eq:starstarstar}
&\gamma_t \langle G(m^t; \zeta_t),\ m^t - m^* \rangle 
\leq \nonumber\\ &D_{KL}(m^* \| m^t) - D_{KL}(m^* \| m^{t+1}) + \frac{\gamma_t^2 G_\infty^2}{2}.
\end{align}

\textbf{Step 4: Taking expectations and applying strong convexity condition.} Since $m^t$ is $\mathcal{F}_{t-1}$-measurable and $G(m^t; \zeta_t)$ is conditionally unbiased,
\[
\mathbb{E}\bigl[\langle G(m^t; \zeta_t),\ m^t - m^* \rangle \mid \mathcal{F}_{t-1}\bigr] = \langle \nabla F(m^t),\ m^t - m^* \rangle.
\]
Applying strong convexity condition,
\begin{align*}
&\langle \nabla F(m^t),\ m^t - m^* \rangle \geq \\ &F(m^t) - F(m^*) + \nu\, D_{KL}(m^* \| m^t) \geq \\ &\nu\, D_{KL}(m^* \| m^t),
\end{align*}
where the last inequality uses $F(m^t) \geq F(m^*)$ since $m^*$ is the minimizer. Taking total expectations in \eqref{eq:starstarstar},
\begin{align} \label{eq:recursion}
\mathbb{E}[D_{KL}(m^* \| m^{t+1})] \leq (1 - \gamma_t \nu)\, \mathbb{E}[D_{KL}(m^* \| m^t)] + \frac{\gamma_t^2 G_\infty^2}{2}.
\end{align}

\textbf{Step 5: Solving the recursion.} Let $a_t := \mathbb{E}[D_{KL}(m^* \| m^t)]$ and $C := G_\infty^2 / 2$. Choose the step size $\gamma_t = \frac{2}{\nu(t+1)}$ and define
\[
B := \max\!\left\{ a_0,\ \frac{4C}{\nu^2} \right\}.
\]
We prove by induction that $a_t \leq \frac{B}{t+1}$ for all $t \geq 0$.

\emph{Base case ($t = 0$):} $a_0 \leq B$ by definition of $B$.

\emph{Inductive step:} Assume $a_t \leq \frac{B}{t+1}$. Then from \eqref{eq:recursion},
\begin{align}
a_{t+1} &\leq \left(1 - \frac{2}{t+1}\right) \frac{B}{t+1} + \frac{4C}{\nu^2(t+1)^2} \nonumber \\
&= \frac{B(t-1)}{(t+1)^2} + \frac{4C}{\nu^2(t+1)^2} \nonumber \\
&\leq \frac{B(t-1) + B}{(t+1)^2} = \frac{Bt}{(t+1)^2},
\end{align}
where the last inequality uses $4C/\nu^2 \leq B$. Since $t(t+2) \leq (t+1)^2$, we have $\frac{Bt}{(t+1)^2} \leq \frac{B}{t+2}$, completing the induction.

\textbf{Conclusion.} Setting $t = N$,
\begin{align}
\mathbb{E}_{P^*}[D_{KL}(m^* \| m^N)] \leq &\frac{1}{N+1}\, \max\!\left\{ D_{KL}(m^* \| m^0),\ \frac{2 G_\infty^2}{\nu^2} \right\}.
\end{align}
\end{proof}

\end{document}